\documentclass[10pt,journal,compsoc,onecolumn]{IEEEtran}

\usepackage{amsmath}
\usepackage{amssymb}
\usepackage{mathtools}
\usepackage{amsthm}

\theoremstyle{plain}
\newtheorem{theorem}{Theorem}[section]
\newtheorem{proposition}[theorem]{Proposition}
\newtheorem{lemma}[theorem]{Lemma}
\newtheorem{corollary}[theorem]{Corollary}
\theoremstyle{definition}

\theoremstyle{remark}
\newtheorem{remark}[theorem]{Remark}

\usepackage[dvipsnames]{xcolor}
\usepackage{multirow}

\usepackage{colortbl}

\usepackage[textsize=small,colorinlistoftodos]{todonotes}

\usepackage{commath}

\usepackage{hyperref}
\hypersetup{
    colorlinks=true,
    linkcolor=blue,
    filecolor=magenta,      
    urlcolor=blue,
    pdftitle={Overleaf Example},
    pdfpagemode=FullScreen,
    citecolor=Blue
    }
\usepackage{amsfonts}
\usepackage{amsmath}
\usepackage{amssymb}
\usepackage{mathtools}
\usepackage{multicol}
\usepackage{multirow,tabularx}
\usepackage{booktabs}
\setlength\heavyrulewidth{0.25ex}
\usepackage{caption}
\usepackage{enumitem}
\usepackage{subfigure}
\usepackage[dvipsnames]{xcolor}
\usepackage{hyperref}
\usepackage[para,online,flushleft]{threeparttable}

\ifCLASSOPTIONcompsoc
  \usepackage[nocompress]{cite}
\else
  \usepackage{cite}
\fi



\ifCLASSOPTIONcaptionsoff
    \usepackage[nomarkers]{endfloat}
    \let\MYoriglatexcaption\caption
    \renewcommand{\caption}[2][\relax]{\MYoriglatexcaption[#2]{#2}}
\fi

\usepackage{url}

\hyphenation{}

\begin{document}

\title{Learning from Heterophilic Graphs: A Spectral Theory Perspective on the Impact of Self-Loops and Parallel Edges}


\author{Kushal Bose and Swagatam Das\\
\IEEEcompsocitemizethanks{\IEEEcompsocthanksitem Electronics and Communication Sciences Unit, Indian Statistical Institute, Kolkata, India.\protect\\

E-mail: kushalbose92@gmail.com, swagatam.das@isical.ac.in
}
}

\IEEEtitleabstractindextext{%
\begin{abstract}
Graph heterophily poses a formidable challenge to the performance of Message-passing Graph Neural Networks (MP-GNNs). The familiar low-pass filters like Graph Convolutional Networks (GCNs) face performance degradation, which can be attributed to the blending of the messages from dissimilar neighboring nodes. The performance of the low-pass filters on heterophilic graphs still requires an in-depth analysis. In this context, we update the heterophilic graphs by adding a number of self-loops and parallel edges. We observe that eigenvalues of the graph Laplacian decrease and increase respectively by increasing the number of self-loops and parallel edges. We conduct several studies regarding the performance of GCN on various benchmark heterophilic networks by adding either self-loops or parallel edges. The studies reveal that the GCN exhibited either increasing or decreasing performance trends on adding self-loops and parallel edges. In light of the studies, we established connections between the graph spectra and the performance trends of the low-pass filters on the heterophilic graphs. The graph spectra characterize the essential intrinsic properties of the input graph like the presence of connected components, sparsity, average degree, cluster structures, etc. Our work is adept at seamlessly evaluating graph spectrum and properties by observing the performance trends of the low-pass filters without pursuing the costly eigenvalue decomposition. The theoretical foundations are also discussed to validate the impact of adding self-loops and parallel edges on the graph spectrum.   
\end{abstract}

\begin{IEEEkeywords}Graph, Convolution, Message passing, Spectrum, Self-loops, Parallel edges, 
\end{IEEEkeywords}
}

\maketitle

\section{Introduction}
Graph Neural Networks \cite{gnn} made remarkable strides by achieving impeccable performance in graph-structured data. The key reason behind the immense performance superiority is the message passing (MP) framework, which enables the exchange of messages between the adjacent nodes. Before judging the narrative of the success story of the MP framework, let us first mention that graphs can be broadly categorized into two classes such as (1) \textit{homophilic} graphs where adjacent nodes share identical class labels, and (2) \textit{heterophilic} graphs where adjacent node labels are different from each other. The prowess of MP is mostly observed in homophilic graphs because of the tendency to blend messages from similar types of neighbors. In contrast, several studies \cite{h2gcn}, \cite{cpgnn}, \cite{bmgcn}, \cite{wrgat}, \cite{hoggcn} suggest that the MP framework shows exacerbating performances on heterophilic graphs due to the influence of dissimilar messages received from neighbors. 

A well-known study \cite{all_low_pass} reveals that all MP-GNNs such as GCN \cite{gcn}, GraphSage \cite{graphsage}, GAT \cite{gat}, SGC \cite{sgc}, etc, which smooth the features of adjacent nodes, are low-pass filters. The low-pass filters successfully convert the features of the connected nodes into more similar ones compared to the features of other non-adjacent nodes. This narrates the key reason behind the successful application of low-pass filters on homophilic graphs. Applying low-pass filters on the heterophilic graphs often leads to a degradation in performance as a result of smoothing the features of the dissimilar adjacent nodes. Therefore, analyzing the performance of low-pass filters on heterophilic graphs requires more in-depth scrutiny. The low-pass filters are designed to amplify the coefficients of lower frequencies in the graph spectrum, representing the eigenvalues of the symmetrically normalized graph Laplacian. In homophilic graphs, the smoothing of node features occurs due to the amplification of the lower frequencies of the graph spectrum. In the case of heterophilic graphs, high-pass filters sharpen the node features by amplifying the higher frequencies of the graph spectrum.  

The graph spectrum entails significant information regarding structural patterns such as connected components, community structures, isolated nodes, sparsity, etc. For instance, if the set of eigenvalues contains sufficient zeros, then the network will contain more connected components or isolated nodes. The network will contain weakly (strongly) connected components if the spectrum has a higher number of low frequencies (high frequencies). Therefore, the dissection of the spectrum yields profound information relating to the spatial properties of the graphs. In this work, we investigate the dependency of the graph structure on the performance of the low-pass filters applied to heterophilic networks. We also attempt to uncover the structural properties of the existing heterophilic graphs from their spectrum. The graph spectrum is typically obtained with expensive eigenvalue decomposition which imposes unnecessary computational burden or often can be infeasible to some real-world scenarios. Therefore, we seek to devise an efficient avenue that significantly addresses the computational overhead of evaluating the graph spectrum.


\begin{table}[!ht]
    \centering
    \scriptsize
     \caption{Four possible categories A, B, C, and D are presented. Each category depends on the performance trends of a low-pass filter when either self-loops or parallel edges are added to the heterophilic graph. }
    \label{tab:category}
    \resizebox{0.70\columnwidth}{!}{
    \begin{tabular}{c|cc|c}
    \toprule
        \multicolumn{4}{c}{Rewiring} \\
        \midrule
        \multirow{5}{*}{Performance of LPF} & Self-loop & Parallel edge & Category \\
        \cmidrule{2-4}
        & \cellcolor[HTML]{9BF398} Increasing $(\uparrow)$ & \cellcolor[HTML]{9BF398} Increasing $(\uparrow)$ & A \\
        & \cellcolor[HTML]{9BF398} Increasing $(\uparrow)$ & \cellcolor[HTML]{EE9792} Decreasing $(\downarrow)$ & B \\
        & \cellcolor[HTML]{EE9792} Decreasing $(\downarrow)$ & \cellcolor[HTML]{9BF398} Increasing $(\uparrow)$ & C \\
        & \cellcolor[HTML]{EE9792} Decreasing $(\downarrow)$ & \cellcolor[HTML]{EE9792} Decreasing $(\downarrow)$ & D \\
        \bottomrule
    \end{tabular}}
\end{table}

We aim to bridge the gap by offering two simple strategies that update the graph topology by incorporating self-loops and parallel edges. After the alteration of edge connections, Graph Convolution Network (GCN), a recognized and well-adopted low-pass filter, is applied to the updated graph. We observe some interesting patterns in the performance trends of GCN when the number of self-loops or parallel edges gradually increases. The performance either monotonically improves or degrades by adding either self-loops or parallel edges. Therefore, we can have four distinct combinations of performance trends. Each combination is tagged with a category name, which is mentioned in the Table \ref{tab:category}. The category assignment is purely dependent on the combination of performance trends in both self-loops and parallel edge addition. For instance, if GCN shows an increasing trend on self-loop addition and a decreasing trend on parallel edge addition, then the underlying dataset is categorized as "B". Furthermore, for parallel edge addition, if the trend changes to increasing, then the category will change to "A". Importantly, every category delineates a particular set of characteristics regarding the spectrum of the input graph. The performance trend of GCN can be attributed to the underlying parity between the lower and higher frequencies present in the graph spectrum. Each performance trend offers a unique insight into the parity of lower and higher frequencies, which directly links with the spatial edge connectivity of the network. We also observe that the eigenvalues of the normalized graph Laplacian decrease with the addition of self-loops in the network. Conversely, the addition of parallel edges enhances the eigenvalues of the graph Laplacian. The shrinking or expansion of the graph frequencies leads to a specific performance trend, reflecting the distribution of eigenvalues (or frequencies) in the graph spectrum. In this context, we consider $17$ benchmark heterophilic graphs to predict their spectrum characteristics by observing the performance trends of GCN after adding either self-loops or parallel edges. We also offer the theoretical underpinnings of the shrinking or expansion of the eigenvalues. The phenomena are also observed empirically on the random Erd\H{o}s-R\'{e}ny graphs. 

\vspace{5pt}
\noindent
\textbf{Contribution} Our contributions are briefly outlined as follows,
\begin{itemize}
    \item We provide deeper analyses pertaining to the performance of GCN, a low-pass filter, applied to $17$ benchmark heterophilic graphs. We modify the graph structure with the addition of self-loops and parallel edges separately. GCN is applied to the updated graphs and observes the performance trends. We categorize the performance trends into four categories and each graph lies in one of the four categories. 
    \item The categorization of performance trends leads to the identification of characteristics of the graph spectrum. Different performance trends underscore the various patterns of the spectrum which reveals the properties of the networks like connected components, community structure, sparsity, etc. 
    \item We also observe that the frequencies in the graph spectrum decrease with the addition of self-loops and frequencies increase with the addition of parallel edges. We also establish a connection between the performance trends of GCN and the shrinking or expansion of the frequencies of the graph spectrum. 
    \item We offer extensive theoretical underpinnings for the shrinking or expansion of the frequencies of the graph spectrum with the addition of self-loops and parallel edges in the network. The detailed proofs and derivations are discussed in the Appendix. 
     
\end{itemize}

\begin{figure}[!ht]
    \centering
    \includegraphics[width=0.75\textwidth]{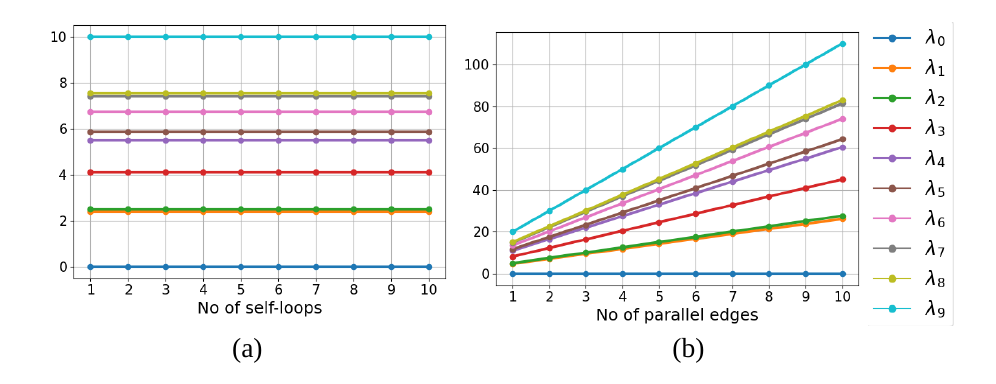}
    \caption{The changes in the eigenvalues of unnormalized graph Laplacian are presented with the addition of (a) self-loops and (b) parallel edges respectively. The eigenvalues remain unaltered with the addition of self-loops. On the contrary, the unconstrained growth of eigenvalues is observed with the addition of parallel edges. }
    \label{fig:enter-label}
\end{figure}

\section{Related Works}
The spectral analysis on the graphs gains traction due to its ability to unravel the relationship between frequencies and the spatial connectivity of the networks. Work like \cite{signal_introduction} introduces the key ingredients of signal processing like Graph Fourier transforms, frequencies, and the design of the filters for the graph-structured data. Another line of work \cite{signal_overview} deals with the intricate details of graph signal processing by shedding light on the spectrum analysis from the perspective of the graph Laplacian, extensive real-world applications, and the underlying challenges. The exploitation of spectral analysis in the discrete domain is rigorously harnessed by \cite{discrete_signal}. Another mode of work \cite{filterbanks} offers the prospect of designing versatile filter banks and spectral wavelets on graph-structured data. The design of efficient and localized convolutional filters becomes an inevitable area of research which is initiated by \cite{chebygcn}.

A large pool of well-adopted GNNs like GCN, GraphSage, GAT, SGC, etc are recognized as potential low-pass filters. The fact is first asserted by \cite{all_low_pass}. Chen \textit{et al.} \cite{spatial_and_spectral} established the bridge to fill the gap between the spatial and spectral properties of the prevalent graph neural networks. AutoGCN \cite{auto_filter} proposes a variant of GCN equipped with low-pass, high-pass, and band-pass filters which automatically adjusts magnitude depending on the homophily or heterophily of the input graph. In this connection, another prominent work AdaGNN \cite{adagnn} learns an adaptive filter that spans across multiple layers, capturing the varying node frequencies to improve node embeddings. Taking the cue from above, FAGCN \cite{fagcn} firstly proposed one low-pass and one high-pass filter. The proposed filters automatically maintain the balance of collecting information from both homophilic and heterophilic graphs. Designing novel convolutional filters became an enticing avenue which is evident by \cite{graph_arma} which employed an auto-regressive moving average filter (ARMA) filter by replacing polynomial filters to achieve a more robust, flexible frequency response. Another work EGC \cite{egc} employed the effectiveness of isotropic message passing, where the message function only depends on the source nodes, over the anisotropic message passing. EGC allows the involvement of multiple learnable filters to achieve a spatially evolving frequency response. \cite{ssgc} leverages the use of Markov Diffusion Kernel to obtain a filter that maintains a delicate balance between harnessing information of local and global context for each node across the network. On the other side, \cite{signed_gnn} identified the gap for spectral analysis on signed graphs. To mitigate the gap, they proposed two signed GNNs that retain low-pass and high-pass information, respectively. Compatible Label Propagation (CLP) \cite{hetero_lp} combines a class compatibility matrix and label propagation to improve node classification on heterophilic graphs. Very recently, an inductive spectral filter SLOG \cite{slog} was propose,d which considers real-valued order polynomial filters. SLOG also combines subgraph sampling in the spatial domain and signal processing in the spectral domain. Another orthogonal work \cite{eigen_effect} studied the effect on the eigenvalues of the adjacency matrix, but no theoretical analysis was provided on the graph Laplacian or graph spectrum. Moreover, they offered analyses on the eigenvalues of the adjacency matrix for the addition of any edges. In contrast, our work predominantly presents systematic addition of self-loops and parallel edges, which offers insights into the graph spectra in the context of heterophilic graphs.

\begin{figure*}[!ht]
    \centering
    \includegraphics[width=\textwidth]{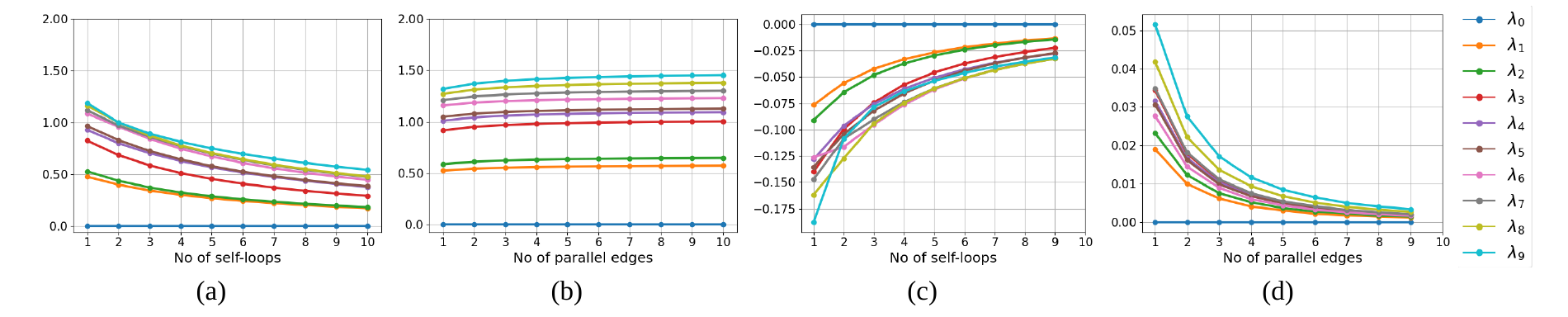}
    \caption{(a) Eigenvalues of $\Tilde{L}$ exhibit a decreasing trend with the increase in addition of self-loops, and (b) eigenvalues show an increasing trend with the addition of parallel edges. The corresponding changes in the eigenvalues with the addition of self-loops and parallel edges are demonstrated in (c) and (d), respectively. }
    \label{fig:empirical_evidence}
\end{figure*}

\section{A Deeper Investigation}

\subsection{Notations}
Consider an attributed graph $\mathcal{G} = (\mathcal{V}, \mathcal{E}, X)$ where $\mathcal{V}$ denotes the set of vertices with $|\mathcal{V}| = n$, $\mathcal{E} \subseteq \mathcal{V} \times \mathcal{V}$ is the set of edges, and $X \in \mathbb{R}^{n \times d}$ is the feature matrix contains $d$-dimensional feature vectors. We define graph Laplacian $L = D - A$ and symmetrically normalized Laplacian as $\Tilde{L} = D^{-\frac{1}{2}} L D^{-\frac{1}{2}} = \text{I} - D^{-\frac{1}{2}} A D^{-\frac{1}{2}}$. Also, augmenting self-loops the normalized Laplacian will be $\Tilde{L} = \text{I} - \Tilde{D}^{-\frac{1}{2}} \Tilde{A} \Tilde{D}^{-\frac{1}{2}}$ where $\Tilde{A} = A + \text{I}$ and $\Tilde{D} = D + \text{I}$. More intricate details on notations are available in Table \ref{tab:notation_table}.

\subsection{Preliminaries on Spectral Graph Theory}
The spectral analysis of graphs \cite{gsp_1} revolves around understanding the characteristics of eigenvalues and eigenvectors of the symmetrically normalized graph Laplacian $\Tilde{L}$. The analysis entails that the eigenvalues and eigenvectors of $\Tilde{L}$ represent the Fourier frequency and Fourier modes. Suppose the eigendecomposition of Laplacian yields us $\Tilde{L} = U \Sigma U^{\top}$ where columns of $U$ represent the eigenvectors and $\Sigma$ is a diagonal matrix containing the eigenvalues. Let a signal $x \in \mathbb{R}^{n}$ act on the nodes in the graph, then the Fourier transformation of $x$ is presented as $\hat{x} = U^{\top}x$. The inverse Fourier transform can be formulated as $x=U\Tilde{x}$. Thus, for any filter $g$, the graph convolution between $g$ and $x$ is estimated as:
\begin{equation}
    g * x = U((U^{\top} g) \odot (U^{\top} x)) = U \Tilde{G} U^{\top} x,
\end{equation}
where $\odot$ denotes the element-wise vector multiplication and $\Tilde{G} = \text{diag} \{\Tilde{g}_1, \cdots, \Tilde{g}_n\}$. Each $\Tilde{g}_i$ denotes the spectral filter coefficient. A well-known fact is that $\Tilde{L}$ has the eigenvalues lie in $[0,2]$. Let us categorize the set of eigenvalues as $\lambda_{< 1}$ or \textit{lower frequencies} which are strictly smaller than $1$ and $\lambda_{\ge 1}$ or \textit{higher frequencies}, which are greater than or equal to $1$. If a filter amplifies the coefficients of $\lambda_{< 1}$, then it acts as a low-pass filter, and on the contrary high-pass filter amplifies the coefficients of $\lambda_{\ge 1}$. For instance, the filter function of GCN is $\Tilde{G} = \text{I} - \Sigma$ where the coefficients of $\lambda_{< 1}$ increases. Therefore, GCN acts as a low-pass filter.

\begin{table}[!ht]
    \centering
    \caption{The table represents all possible notations used in this work are presented with descriptions. }
    \label{tab:notation_table}
    \begin{tabular}{l|l}
    \toprule
    Notation & Decription \\
    \midrule
       A  & Adjacency matrix \\
       D  & Degree matrix \\
       $L=D-A$ & Unnomalized graph Laplacian \\
       $\Tilde{A} = A + I$ & Adjacency matrix augmented with self-loop \\
       $A_{N} = D^{-\frac{1}{2}} A D^{-\frac{1}{2}}$ & Symmetrically normalized adjacency matrix \\
       $\Tilde{D} = D + I$ & Degree matrix augmented with self-loop \\
       $\Tilde{A}_{\alpha} = A + \alpha I$ & Adjacency matrix augmented with $\alpha$-time self-loops \\
       $\Tilde{D}_{\alpha} = D + \alpha I$ & Degree matrix augmented with $\alpha$-time self-loops \\
       $\Tilde{A}_{\gamma} = (\gamma + 1) A + \text{I}$ & Adjacency matrix augmented with $\gamma$-times parallel edges with one self-loops \\
       $\Tilde{D}_{\gamma} = (\gamma + 1) D + \text{I}$ & Degree matrix augmented with $\gamma$-times parallel edges with one self-loops \\
       $\Tilde{A}_{N}^{\alpha} = \Tilde{D}^{-\frac{1}{2}}_{\alpha} \Tilde{A}_{\alpha} \Tilde{D}^{-\frac{1}{2}}_{\alpha}$ & Symmetrically normalized adjacency matrix augmented with $\alpha$-times self-loops \\
       $\Tilde{A}_{N}^{\gamma} = \Tilde{D}^{-\frac{1}{2}}_{\gamma} \Tilde{A}_{\gamma} \Tilde{D}^{-\frac{1}{2}}_{\gamma}$ & Symmetrically normalized adjacency matrix augmented with $\gamma$-times parallel edges \\
       $\Tilde{L}_{\alpha} = \text{I} - \Tilde{D}^{-\frac{1}{2}}_{\alpha} \Tilde{A}_{\alpha} \Tilde{D}^{-\frac{1}{2}}_{\alpha} $ & Symmetrically normalized graph Laplacian augmented with $\alpha$-times self-loops \\
       $\Tilde{L}_{\gamma} = \text{I} - \Tilde{D}^{-\frac{1}{2}}_{\gamma} \Tilde{A}_{\gamma} \Tilde{D}^{-\frac{1}{2}}_{\gamma}$ & Symmetrically normalized graph Laplacian augmented with $\gamma$-times parallel edges \\
       \bottomrule
    \end{tabular}
\end{table}

\subsection{Addition of Self-loops}
We conduct experiments by adding the self-loops corresponding to each node in the graph. The number of self-loops is denoted by $\alpha$. After the addition of self-loops $\alpha$-times, the adjacency matrix will be $\Tilde{A}_{\alpha} = A + \alpha \text{I}$ and the corresponding degree matrix will look like $\Tilde{D}_{\alpha} = D + \alpha \text{I}$. Therefore, the symmetrically normalized graph Laplacian can be presented as $\Tilde{L}_{\alpha} = \text{I} - \Tilde{D}^{-\frac{1}{2}}_{\alpha} \Tilde{A}_{\alpha} \Tilde{D}^{-\frac{1}{2}}_{\alpha} $.

\subsection{Addition of Parallel edges}
We also perform experiments by adding parallel edges corresponding to every edge in the graph. Assume $\gamma$ denotes the number of parallel edges to be added in the network. Therefore, adding $\gamma$-times parallel edges, the updated adjacency matrix will be $A_{\gamma} = (\gamma + 1) A$. Thus, the corresponding degree matrix will be $D_{\gamma} = (\gamma + 1) D$. Adding self-loops the further modified adjacency and degree matrices will be $\Tilde{A}_{\gamma} = (\gamma + 1) A + \text{I}$ and $\Tilde{D}_{\gamma} = (\gamma + 1) D + \text{I}$. Therefore, we can define the corresponding symmetrically normalized graph Laplacian as $\Tilde{L}_{\gamma} = \text{I} - \Tilde{D}^{-\frac{1}{2}}_{\gamma} \Tilde{A}_{\gamma} \Tilde{D}^{-\frac{1}{2}}_{\gamma}$. 

\begin{table*}[!ht]
    \centering
    \scriptsize
    \caption{Total number of nodes, isolated nodes, edge density ($\text{sp}_{\mathcal{G}}$), and average degree ($d_{\text{avg}}$) for $17$ heterophilic graphs are presented. We also mentioned the log-scaled versions of edge density ($\overline{\text{sp}}_{\mathcal{G}}$) and average degrees ($\overline{d}_{\text{avg}}$), with lower values indicating higher density and average degree. }
    \label{tab:isolated}
    \resizebox{\columnwidth}{!}{
    \begin{tabular}{lcccccc}
    \toprule
    Properties/Datasets & Cornell & Texas & Wisconsin & Chameleon & Squirrel & Actor \\
    \midrule
    \# nodes & $183$ & $183$ & $251$ & $2277$ & $5201$ & $7600$\\
    \# isolated nodes (\%) & $87 (47.5\%)$ & $73 (39.8\%)$ & $81 (32.3 \%)$ & $0 (0 \%)$ & $0 (0 \%)$ & $636 (8.36\%)$\\
    density ($\text{sp}_{\mathcal{G}}$/$\overline{\text{sp}}_{\mathcal{G}}$) & $0.017$/$4.02$ & $0.019$/$3.93$ & $0.016$/$4.10$ & $0.013$/$4.27$ & 
    $0.016$/$4.13$ & $0.001$/$6.86$\\
    avg. degree ($d_{\text{avg}}$/$\overline{d}_{\text{avg}}$) & $1.62$/$3.82$ & $1.77$/$3.82$ & $2.05$/$4.13$ & $15.85$/$6.34$ & $41.73$/$7.17$ & $3.94$/$7.54$\\
    \midrule
    Properties/Datasets & arxiv-year & snap-patents & Penn94 & pokec & twitch-gamers & genius \\
    \midrule
    \# nodes & $169343$ & $2923922$ & $41554$ & $1632803$ & $168114$ & $421961$ \\ 
    \# isolated nodes (\%) & $17440 (10.29\%)$ & $881754 (30.15\%)$ & $0 (0\%)$ & $200110 (12.25\%)$ & $44596 (26.52\%)$ & $371870 (88.12\%)$ \\ 
    density ($\text{sp}_{\mathcal{G}}$/$\overline{\text{sp}}_{\mathcal{G}}$) & $8.1\text{e-}5$/$9.41$ & $3.2\text{e-}6$/$12.63$ & $0.003$/$5.75$ & $3.3\text{e-}5$/$10.68$ & $9.6\text{e-}4$/$7.63$ & $2\text{e-}5$/$11.41$ \\ 
    avg. degree ($d_{\text{avg}}$/$\overline{d}_{\text{avg}}$) & 
    $6.88$/$10.65$ & $4.77$/$13.50$ & $65.56$/$9.24$ & $27.31$/$12.91$ & 
    $80.86$/$10.64$ & $4.37$/$11.56$ \\
    \midrule 
    Properties/Datasets & Roman-empire & Amazon-ratings & Minesweeper & Tolokers & Questions & - \\
    \midrule
    \# nodes & $22662$ & $24492$ & $10000$ & $11758$ & $48921$ \\
    \# isolated nodes (\%) & $0 (0 \%)$ & $3495 (14.26\%)$ & $1 (0.01 \%)$ & $936 (7.96\%)$ & $17761 (36.3 \%)$ \\
    density ($\text{sp}_{\mathcal{G}}$/$\overline{\text{sp}}_{\mathcal{G}}$) & $2.5\text{e-}4$/$8.26$ & $3.1\text{e-}4$/$8.07$ & $7.8\text{e-}4$/$7.14$ & $0.007$/$4.89$ & $1.2\text{e-}4$/$8.96$ \\
    avg. degree ($d_{\text{avg}}$/$\overline{d}_{\text{avg}}$) & 
    $2.91$/$8.64$ & $3.79$/$8.71$ & $3.94$/$7.82$ & $44.14$/$7.98$ & $3.14$/$9.41$ \\
    \bottomrule
    \end{tabular}}
\end{table*}

\subsection{Empirical Evidence on Random Graphs}
We conducted experiments on randomly generated Erd\H{o}s-R\'enyi graphs to study the effects on the eigenvalues with the addition of self-loops and parallel edges in the graph. A random graph $\mathcal{G}_{er}$ is generated with $10$ vertices and having edge probability $0.50$. Two individual experiments are performed with (1) the addition of self-loops, and (2) the addition of parallel edges. We varied the number of self-loops or parallel edges ranging from $1$ to $10$. The eigendecomposition is performed for every stage of addition to monitor the changes that occurred in the corresponding eigenvalues. Refer to Figure \ref{fig:empirical_evidence} for the portrayal of the effect on the eigenvalues and the other corresponding changes in the spectrum. The plots (a) and (b) demonstrate the effect on the of $10$ eigenvalues of $\mathcal{G}_{er}$ while the addition of self-loops or parallel edges in the graph. On the other side, plots (c) and (d) depict the changes in the eigenvalues. As we have added $10$ self-loops or parallel edges, thus $9$ differences are recorded in the plots.  


\vspace{5pt}
\noindent
\textbf{Observation} The plots reaffirm that the addition of self-loops leads to the shrinking of the eigenvalues in the spectrum except for the eigenvalue $0$ (Refer Figure \ref{fig:empirical_evidence}(a)). On the contrary, eigenvalues increase with the addition of parallel edges (Refer Figure \ref{fig:empirical_evidence}(b)). Additionally, we also present the change in the eigenvalues in Figures \ref{fig:empirical_evidence}(c) and \ref{fig:empirical_evidence}(d). Notably, the monotone increase (decrease) of the curves underlines the slower rate with the addition of self-loops (parallel edges).

\begin{table*}[!ht]
    \centering
    \scriptsize
    \caption{GCN (a low-pass filter) is applied on the $6$ standard heterophilic datasets curated by Pie \textit{et al}. \cite{geomgcn}. Performance analyses are demonstrated after adding multiple self-loops and parallel edges respectively in the graphs. The performance trends are also highlighted and corresponding categories are marked for each dataset. }
    \resizebox{\columnwidth}{!}{
    \begin{tabular}{l|c|cccccc}
    \toprule
     Method & Input adjacency & Chameleon & Squirrel & Actor & Cornell & Texas & Wisconsin \\
     \midrule
     \multirow{5}{*}{GCN} & A + I & $71.68\pm1.92$ & $62.78\pm1.99$ & $27.40\pm1.12$ & $40.27\pm6.44$ & $54.86\pm5.27$ & $45.29\pm6.10$ \\
    & A + 2I & $67.69\pm2.25$ & $58.64\pm2.27$ & $29.20\pm1.13$ & $43.78\pm5.09$ & $56.21\pm4.95$ & $50.19\pm6.39$ \\
    & A + 3I  & $65.15\pm1.56$ & $55.35\pm1.76$ & $30.67\pm1.25$ & $47.83\pm8.11$ & $54.05\pm4.18$ & $56.47\pm3.80$ \\
    & A + 4I  & $63.22\pm1.22$ & $53.43\pm1.40$ & $32.08\pm1.20$ & $46.48\pm7.81$ & $58.64\pm6.16$ & $60.98\pm4.37$ \\
    & A + 5I  & $61.90\pm2.51$ & $51.44\pm1.51$ & $32.94\pm0.85$ & $50.27\pm7.47$ & $57.02\pm7.49$ & $61.96\pm5.42$ \\
    \midrule
    Trend & $\rightarrow$ & \cellcolor[HTML]{EE9792} Decreasing ($\downarrow$) & \cellcolor[HTML]{EE9792} Decreasing ($\downarrow$) & \cellcolor[HTML]{9BF398} Increasing ($\uparrow$) & \cellcolor[HTML]{9BF398} Increasing ($\uparrow$) & \cellcolor[HTML]{9BF398} Increasing ($\uparrow$) & \cellcolor[HTML]{9BF398} Increasing ($\uparrow$) \\
    \midrule
    \multirow{5}{*}{GCN} & A + I & $71.68\pm1.92$ & $62.70\pm1.99$ & $27.40\pm1.12$ & $40.27\pm6.44$ & $54.86\pm5.27$ & $45.29\pm6.10$ \\
    & 2A + I & $75.06\pm1.24$ & $66.43\pm2.40$ & $25.54\pm1.30$ & $40.54\pm6.39$ & $54.59\pm6.13$ & $47.45\pm4.94$ \\
    & 3A + I &  $76.31\pm0.99$ & $67.79\pm1.96$ & $25.63\pm0.69$ & $44.05\pm7.83$ & $55.67\pm5.43$ & $47.25\pm5.22$ \\
    & 4A + I & $76.60\pm0.77$ & $67.92\pm1.66$ & $24.76\pm1.19$ & $45.67\pm8.58$ & $51.89\pm7.43$ & $46.86\pm4.24$ \\
    & 5A + I & $77.32\pm1.07$ & $68.59\pm1.71$ & $24.82\pm1.34$ & $41.08\pm5.64$ & $51.08\pm10.0$ & $46.86\pm4.75$ \\
    \midrule
    Trend & $\rightarrow$ & \cellcolor[HTML]{9BF398} Increasing ($\uparrow$) & \cellcolor[HTML]{9BF398} Increasing ($\uparrow$) & \cellcolor[HTML]{EE9792} Decreasing ($\downarrow$) & \cellcolor[HTML]{9BF398} Increasing ($\uparrow$) & \cellcolor[HTML]{9BF398} Increasing ($\uparrow$) & \cellcolor[HTML]{9BF398} Increasing ($\uparrow$) \\
    \midrule
    Category & $\rightarrow$ & C & C & B & A & A & A \\
    \bottomrule
    \end{tabular}}
    \label{tab:standard_lpf}
\end{table*}

\subsection{Theoretical Analysis: A Spectral Perspective}
We perform a deeper theoretical analysis of the effect on the graph spectrum when adding self-loops or parallel edges. The detailed study is provided as follows,

\begin{lemma}
\label{max_self_loop}
    Consider a graph $G$ with $A$ and $D$ as the adjacency and degree matrix. Now $\alpha$-times self-loops are added in $G$ with $\alpha_1 \in \mathbb{Z}^{+}$. Assume $\lambda_{\alpha}^{\text{max}}$ is the maximum eigenvalue of symmetrically normalized graph Laplacian $\Tilde{L}_{\alpha}$ of the updated graph. If $\beta_1$ is the smallest eigenvalue of $D^{-\frac{1}{2}} A D^{-\frac{1}{2}}$ and $\max_{i} d_i$ is the maximum degree of $G$, then $\lambda_{\alpha}^{\text{max}} \le \frac{\max_{i} d_i (1 - \beta_1)}{\alpha + \max_{i} d_i}$. 
\end{lemma}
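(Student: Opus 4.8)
The plan is to bound $\lambda_{\alpha}^{\text{max}}$ via the Rayleigh quotient of $\widetilde{L}_{\alpha} = \mathrm{I} - \widetilde{D}_{\alpha}^{-1/2}\widetilde{A}_{\alpha}\widetilde{D}_{\alpha}^{-1/2}$, where $\widetilde{A}_{\alpha}=A+\alpha\mathrm{I}$ and $\widetilde{D}_{\alpha}=D+\alpha\mathrm{I}$. Since $\lambda_{\alpha}^{\text{max}} = \max_{x\neq 0} \frac{x^{\top}\widetilde{L}_{\alpha}x}{x^{\top}x}$, it suffices to lower-bound the quantity $\widetilde{D}_{\alpha}^{-1/2}\widetilde{A}_{\alpha}\widetilde{D}_{\alpha}^{-1/2}$ in the quadratic-form sense; equivalently, I want to show $\widetilde{D}_{\alpha}^{-1/2}\widetilde{A}_{\alpha}\widetilde{D}_{\alpha}^{-1/2} \succeq c\,\mathrm{I}$ for a suitable scalar $c$, which would give $\lambda_{\alpha}^{\text{max}} \le 1 - c$. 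The target bound $\frac{\max_i d_i(1-\beta_1)}{\alpha+\max_i d_i}$ suggests I should aim for $c = 1 - \frac{\max_i d_i(1-\beta_1)}{\alpha+\max_i d_i} = \frac{\alpha + \beta_1\max_i d_i}{\alpha+\max_i d_i}$.

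The key step is to compare $\widetilde{D}_{\alpha}^{-1/2}\widetilde{A}_{\alpha}\widetilde{D}_{\alpha}^{-1/2}$ with the unaugmented normalized matrix $A_N = D^{-1/2}AD^{-1/2}$, whose smallest eigenvalue is $\beta_1$. Writing $\widetilde{A}_{\alpha} = A + \alpha\mathrm{I}$, I would split the quadratic form as $x^{\top}\widetilde{D}_{\alpha}^{-1/2}A\widetilde{D}_{\alpha}^{-1/2}x + \alpha\, x^{\top}\widetilde{D}_{\alpha}^{-1}x$. For the first piece, substitute $y = D^{1/2}\widetilde{D}_{\alpha}^{-1/2}x$ so that $x^{\top}\widetilde{D}_{\alpha}^{-1/2}A\widetilde{D}_{\alpha}^{-1/2}x = y^{\top}A_N y \ge \beta_1\, y^{\top}y = \beta_1\, x^{\top} D\widetilde{D}_{\alpha}^{-1} x$. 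Combining, $x^{\top}\widetilde{D}_{\alpha}^{-1/2}\widetilde{A}_{\alpha}\widetilde{D}_{\alpha}^{-1/2}x \ge x^{\top}\widetilde{D}_{\alpha}^{-1}(\beta_1 D + \alpha\mathrm{I})x$. Now $\widetilde{D}_{\alpha}^{-1}(\beta_1 D + \alpha \mathrm{I})$ is diagonal with entries $\frac{\beta_1 d_i + \alpha}{d_i + \alpha}$, so it remains to show each such entry is at least $\frac{\alpha + \beta_1\max_i d_i}{\alpha + \max_i d_i}$. Because $\beta_1 \le 1$ (as $A_N$ has spectral radius at most $1$ for a graph without isolated vertices; one may instead use $\beta_1 \ge -1$ and handle signs carefully), the map $t \mapsto \frac{\beta_1 t + \alpha}{t+\alpha}$ is monotone, and its extremal value over $0 \le d_i \le \max_i d_i$ is attained at $d_i = \max_i d_i$, yielding the claim; hence $\lambda_{\alpha}^{\text{max}} \le 1 - \frac{\alpha+\beta_1\max_i d_i}{\alpha+\max_i d_i} = \frac{\max_i d_i(1-\beta_1)}{\alpha+\max_i d_i}$.

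The main obstacle I anticipate is the sign of $\beta_1$ and the monotonicity direction of $t \mapsto \frac{\beta_1 t + \alpha}{t+\alpha}$: since $\beta_1$ is typically negative for heterophilic-type graphs, the function is decreasing in $t$, so the minimum over the degree range is indeed at the maximum degree, which is consistent with the stated bound; I need to make sure the inequality $y^{\top}A_N y \ge \beta_1 y^{\top}y$ is applied with the correct orientation and that $\beta_1$ is genuinely the smallest (most negative) eigenvalue. A secondary subtlety is the treatment of isolated or zero-degree vertices, where $D^{-1/2}$ is ill-defined; since self-loops are added to every node, $\widetilde{D}_{\alpha}$ is invertible, but the change of variables $y = D^{1/2}\widetilde{D}_{\alpha}^{-1/2}x$ needs $D^{1/2}$, so I would either restrict to graphs with no isolated vertices or note that isolated vertices contribute eigenvalue components that only make the bound easier. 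Everything else is a routine diagonal-entry optimization.
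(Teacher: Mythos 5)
Your proposal is correct and follows essentially the same route as the paper: a Rayleigh-quotient bound on $\Tilde{L}_{\alpha}$ combined with the substitution $y = D^{1/2}\Tilde{D}_{\alpha}^{-1/2}x$ to pull out the smallest eigenvalue $\beta_1$ of $D^{-1/2}AD^{-1/2}$ (the paper packages this substitution as a separate proposition adapted from SGC and bounds the $A$-term and the $\alpha\Tilde{D}_{\alpha}^{-1}$-term separately, whereas you fold them into a single diagonal comparison $\Tilde{D}_{\alpha}^{-1}(\beta_1 D + \alpha \mathrm{I})$, which is a presentational rather than substantive difference). Your attention to the sign of $\beta_1$ and to isolated vertices is, if anything, more careful than the paper's.
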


\begin{lemma}
\label{max_parallel_edge}
Consider a graph $G$ with $A$ and $D$ as the adjacency and degree matrix. Now $\gamma$-times self-loops are added in $G$ with $\gamma \in \mathbb{Z}^{+}$. Assume $\lambda_{\gamma}^{\text{max}}$ is the maximum eigenvalue of symmetrically normalized graph Laplacian $\Tilde{L}_{\gamma}$ of the updated graph. If $\beta_1$ is the smallest eigenvalue of $D^{-\frac{1}{2}} A D^{-\frac{1}{2}}$ and $\max_{i} d_i$ is the maximum degree of $G$, then $\lambda_{\gamma}^{\text{max}} \le \frac{(1+\gamma) \max_{i} d_i (1 - \beta_1)}{1 + (1+\gamma) \max_{i} d_i}$. 
\end{lemma}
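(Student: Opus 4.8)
The plan is to run the same Rayleigh-quotient argument that underlies Lemma~\ref{max_self_loop}, adapted to the parallel-edge rescaling $\Tilde{A}_{\gamma} = (\gamma+1)A + \text{I}$, $\Tilde{D}_{\gamma} = (\gamma+1)D + \text{I}$. Since $\Tilde{D}_{\gamma}$ is diagonal with strictly positive entries $(\gamma+1)d_i + 1$, the matrix $\Tilde{D}_{\gamma}^{-1/2}\Tilde{A}_{\gamma}\Tilde{D}_{\gamma}^{-1/2}$ is well defined and its eigenvalues coincide with the generalized eigenvalues of the pencil $(\Tilde{A}_{\gamma},\Tilde{D}_{\gamma})$. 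Hence $\lambda_{\gamma}^{\text{max}} = 1 - \mu_{\text{min}}$, where
\[
\mu_{\text{min}} = \min_{x\neq 0}\frac{x^{\top}\Tilde{A}_{\gamma}x}{x^{\top}\Tilde{D}_{\gamma}x} = \min_{x\neq 0}\frac{(\gamma+1)\,x^{\top}Ax + \|x\|^2}{(\gamma+1)\,x^{\top}Dx + \|x\|^2},
\]
so the whole task reduces to bounding this quotient from below by $\dfrac{\beta_1(\gamma+1)\max_i d_i + 1}{(\gamma+1)\max_i d_i + 1}$ and subtracting from $1$.

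First I would bound the quadratic form $x^{\top}Ax$ below in terms of $x^{\top}Dx$: substituting $y = D^{1/2}x$ gives $x^{\top}Ax = y^{\top}A_{N}y \ge \beta_1\|y\|^2 = \beta_1\,x^{\top}Dx$, using that $\beta_1$ is the least eigenvalue of $A_{N} = D^{-1/2}AD^{-1/2}$. Because the denominator $(\gamma+1)x^{\top}Dx + \|x\|^2$ is strictly positive for every $x\neq 0$, lowering the numerator this way only decreases the quotient, leaving me to minimize
\[
\frac{\beta_1(\gamma+1)\,x^{\top}Dx + \|x\|^2}{(\gamma+1)\,x^{\top}Dx + \|x\|^2}.
\]

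Next I would collapse this to a scalar optimization. With $s := (\gamma+1)\,x^{\top}Dx/\|x\|^2$, which lies in $[0,(\gamma+1)\max_i d_i]$ since $0 \le x^{\top}Dx \le (\max_i d_i)\|x\|^2$, the expression equals $f(s) = \dfrac{\beta_1 s + 1}{s+1} = \beta_1 + \dfrac{1-\beta_1}{s+1}$. As the largest eigenvalue of $A_{N}$ equals $1$ we have $1-\beta_1 \ge 0$, so $f$ is non-increasing and its minimum over the admissible range is attained at the upper endpoint $s = (\gamma+1)\max_i d_i$. This gives $\mu_{\text{min}} \ge \frac{\beta_1(\gamma+1)\max_i d_i + 1}{(\gamma+1)\max_i d_i + 1}$, whence
\[
\lambda_{\gamma}^{\text{max}} = 1 - \mu_{\text{min}} \le 1 - \frac{\beta_1(\gamma+1)\max_i d_i + 1}{(\gamma+1)\max_i d_i + 1} = \frac{(1+\gamma)\max_i d_i\,(1-\beta_1)}{1 + (1+\gamma)\max_i d_i},
\]
which is the asserted inequality.

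There is no deep obstacle here — the argument is essentially a re-parametrization of Lemma~\ref{max_self_loop} — but a couple of points need care. The substitution $y = D^{1/2}x$ presumes $D$ is invertible, i.e. that $G$ has no isolated vertices; this is implicit already in the hypothesis, which speaks of the eigenvalues of $D^{-1/2}AD^{-1/2}$. The step that most easily goes wrong is the \emph{direction} of the extremization in $x^{\top}Dx$: replacing it by $(\max_i d_i)\|x\|^2$ is legitimate only after noticing that $f$ is monotone, so that it is the maximal, not the minimal, value of the ratio $x^{\top}Dx/\|x\|^2$ that controls the bound on $\mu_{\text{min}}$. Everything else is routine algebra.
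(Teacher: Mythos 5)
Your proof is correct, and it reaches the paper's bound by a genuinely tidier route. The paper first rewrites $\Tilde{L}_{\gamma} = \text{I} - (\gamma+1)\Tilde{D}_{\gamma}^{-\frac{1}{2}} A \Tilde{D}_{\gamma}^{-\frac{1}{2}} - \Tilde{D}_{\gamma}^{-1}$, bounds the maximum of the sum by the sum of the term-wise extrema, and then invokes a separate proposition (adapted from the SGC analysis) giving $\delta_1 \ge \frac{\max_i d_i}{1+(1+\gamma)\max_i d_i}\beta_1$ for the middle term and the max-degree bound for the diagonal term. You instead keep everything inside a single generalized Rayleigh quotient $\mu_{\min} = \min_x \frac{x^{\top}\Tilde{A}_{\gamma}x}{x^{\top}\Tilde{D}_{\gamma}x}$, use $x^{\top}Ax \ge \beta_1\, x^{\top}Dx$ (which is exactly the content of the paper's auxiliary proposition, obtained by the same substitution $y=D^{1/2}x$), and reduce to the scalar function $f(s)=\beta_1 + \frac{1-\beta_1}{s+1}$ whose monotonicity pins the extremum at $s=(1+\gamma)\max_i d_i$. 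This buys you two things: you avoid the potential slack of maximizing three terms independently (the identical final bound shows the paper loses nothing here, but only because both extremizations are controlled by the same max-degree direction), and you make explicit where the bound is tight and why the sign condition $1-\beta_1\ge 0$ matters. Your caveats — invertibility of $D$ and the direction of the extremization in $x^{\top}Dx$ — are the right ones; note also that the lemma statement's phrase ``self-loops'' is a typo for parallel edges, which you correctly read from the definition of $\Tilde{A}_{\gamma}$.
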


\begin{remark}
The maximum eigenvalue of $\Tilde{L}_{\alpha}$ decreases with the increasing number of self-loops in the network, indicating the shrinking of the graph spectrum. On the contrary, the maximum eigenvalue of $\Tilde{L}_{\gamma}$ increases with the increasing number of parallel edges in the network, illustrating the expansion of the graph spectrum.
\end{remark}

\begin{lemma}
\label{lem_sl}
    Given a $k$-regular graph $\mathcal{G}$, the eigenvalues of $\Tilde{A}_{N}^{\alpha}$ will lie in $[-1, 1]$ $\forall \alpha \ge 1$.
\end{lemma}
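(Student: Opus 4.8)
The plan is to exploit the fact that for a $k$-regular graph the degree matrix collapses to a scalar multiple of the identity, so that $\Tilde{A}_{N}^{\alpha}$ becomes an affine function of the ordinary adjacency matrix $A$; the claimed spectral containment then reduces to the classical bound on the eigenvalues of $A$.

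First I would use $k$-regularity to write $D = k\,\text{I}$, hence $\Tilde{D}_{\alpha} = D + \alpha\,\text{I} = (k+\alpha)\,\text{I}$ while $\Tilde{A}_{\alpha} = A + \alpha\,\text{I}$. Substituting into the definition $\Tilde{A}_{N}^{\alpha} = \Tilde{D}_{\alpha}^{-\frac12}\Tilde{A}_{\alpha}\Tilde{D}_{\alpha}^{-\frac12}$ gives
\[
\Tilde{A}_{N}^{\alpha} = \frac{1}{k+\alpha}\,(A + \alpha\,\text{I}).
\]
Since $k+\alpha>0$, the matrices $A$ and $\Tilde{A}_{N}^{\alpha}$ share the same eigenvectors, and if $\mu$ is an eigenvalue of $A$ then $(\mu+\alpha)/(k+\alpha)$ is the corresponding eigenvalue of $\Tilde{A}_{N}^{\alpha}$.

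Next I would invoke the standard fact that every eigenvalue $\mu$ of the adjacency matrix of a $k$-regular graph satisfies $\mu \in [-k,\,k]$: the bound $\mu \le k$ is immediate from the all-ones eigenvector together with Perron--Frobenius, and $\mu \ge -k$ follows, e.g., from Gershgorin's theorem applied to $A$, whose absolute row sums all equal $k$. Applying the increasing affine map $\mu \mapsto (\mu+\alpha)/(k+\alpha)$ to the extreme values yields
\[
\frac{\mu+\alpha}{k+\alpha}\ \in\ \left[\,\frac{\alpha-k}{\alpha+k},\ 1\,\right].
\]
Finally I would verify that the left endpoint does not fall below $-1$: $\frac{\alpha-k}{\alpha+k}\ge -1$ is equivalent to $\alpha-k \ge -(\alpha+k)$, i.e. $2\alpha \ge 0$, which holds for every $\alpha \ge 1$. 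Hence all eigenvalues of $\Tilde{A}_{N}^{\alpha}$ lie in $[-1,1]$.

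There is no genuine obstacle here; the only point needing care is handling the monotone affine transformation correctly and checking that the lower endpoint stays $\ge -1$, which is precisely where the hypothesis $\alpha \ge 1$ (indeed $\alpha \ge 0$ suffices) enters. Everything else is a direct consequence of $k$-regularity reducing the symmetric normalization to division by the scalar $k+\alpha$.
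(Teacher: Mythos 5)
Your proof is correct and follows essentially the same route as the paper: $k$-regularity collapses $\Tilde{D}_{\alpha}$ to $(k+\alpha)\text{I}$, so $\Tilde{A}_{N}^{\alpha} = \frac{1}{k+\alpha}(A+\alpha \text{I})$ and its eigenvalues are the affine images $(\mu+\alpha)/(k+\alpha)$ of the eigenvalues $\mu$ of $A$. The only difference is that you bound $\mu \in [-k,k]$ directly (Perron--Frobenius/Gershgorin), which yields the sharper lower endpoint $(\alpha-k)/(\alpha+k) \ge -1$ for every $\alpha \ge 0$, whereas the paper derives its bound on $\mu$ from the spectrum of $\Tilde{A}_{N}$ and only gets $\mu \ge -k-2$, so its final inequality genuinely needs $\alpha \ge 1$.
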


\begin{lemma}
\label{lem_pe}
    Given a $k$-regular graph $\mathcal{G}$, the eigenvalues of $\Tilde{A}_{N}^{\gamma}$ will lie in $[-1, 1]$ $\forall \gamma \ge 1$.
\end{lemma}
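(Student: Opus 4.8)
The plan is to exploit $k$-regularity to collapse the normalization to a scalar, reducing the claim to a one-variable computation, exactly as in the proof of Lemma~\ref{lem_sl}. Since $\mathcal{G}$ is $k$-regular, $D = k\,\text{I}$, so $\Tilde{D}_{\gamma} = (\gamma+1)D + \text{I} = \big((\gamma+1)k + 1\big)\text{I}$ is a positive scalar multiple of the identity, and hence so is $\Tilde{D}_{\gamma}^{-1/2}$. Therefore
\[
\Tilde{A}_{N}^{\gamma} = \Tilde{D}_{\gamma}^{-1/2}\,\Tilde{A}_{\gamma}\,\Tilde{D}_{\gamma}^{-1/2} = \frac{1}{(\gamma+1)k+1}\big((\gamma+1)A + \text{I}\big),
\]
which is a real affine function of $A$. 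Consequently $A$ and $\Tilde{A}_{N}^{\gamma}$ have the same eigenvectors, and each eigenvalue $\mu$ of $A$ corresponds to the eigenvalue $f(\mu) := \dfrac{(\gamma+1)\mu + 1}{(\gamma+1)k + 1}$ of $\Tilde{A}_{N}^{\gamma}$.

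Next I would invoke the standard spectral bound for regular graphs: every eigenvalue of the adjacency matrix of a $k$-regular graph lies in $[-k,k]$ (the all-ones vector is a Perron eigenvector with eigenvalue $k$, and $\|A\|_2 = k$ by regularity; alternatively this follows from Gershgorin's theorem together with the symmetry of $A$). Since $f$ is affine and strictly increasing in $\mu$, it suffices to check the endpoints: $f(k) = \dfrac{(\gamma+1)k+1}{(\gamma+1)k+1} = 1$, and $f(-k) = \dfrac{-(\gamma+1)k+1}{(\gamma+1)k+1} \ge -1$, the last inequality being equivalent to $1 \ge -1$, which always holds. Hence $f\big([-k,k]\big) \subseteq [-1,1]$, so all eigenvalues of $\Tilde{A}_{N}^{\gamma}$ lie in $[-1,1]$ for every $\gamma \ge 1$ (in fact for every $\gamma \ge 0$).

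There is no genuinely hard step here; the one point that deserves emphasis is that $\Tilde{D}_{\gamma}$ degenerates to a scalar matrix precisely because $\mathcal{G}$ is regular, which is exactly why the statement is restricted to regular graphs — for an irregular graph $\Tilde{D}_{\gamma}^{-1/2}$ does not commute with $A$ and the affine-reduction argument breaks down. The remainder is the elementary estimate $-(\gamma+1)k + 1 \ge -\big((\gamma+1)k + 1\big)$ and the classical fact $\rho(A) = k$ for $k$-regular graphs.
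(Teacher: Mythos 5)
Your proof is correct, and it follows the same overall reduction as the paper: regularity collapses $\Tilde{D}_{\gamma}$ to a scalar multiple of the identity, so $\Tilde{A}_{N}^{\gamma}$ is an affine image of $A$ and one only needs to push the spectrum of $A$ through the increasing affine map $f(\mu)=\frac{(\gamma+1)\mu+1}{(\gamma+1)k+1}$. The one substantive difference is how the range of the eigenvalues of $A$ is obtained, and here your route is actually the better one. You invoke the standard fact $\rho(A)=k$ for a $k$-regular graph, so $\mu\in[-k,k]$, and the endpoint checks give $f(k)=1$ and $f(-k)=-1+\frac{2}{(\gamma+1)k+1}\ge -1$. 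The paper instead detours through $\Tilde{A}_{N}=\frac{1}{k+1}(A+I)$, assumes its eigenvalues lie in $[-1,1]$, and back-solves to get only $\lambda_A\ge -k-2$; feeding that looser bound through $f$ yields the lower estimate $1-\frac{2(k+1)(\gamma+1)}{(1+\gamma)k+1}=-1-\frac{2\gamma}{(1+\gamma)k+1}$, which is strictly below $-1$ for $\gamma>0$, so the paper's concluding step ``since $k,\gamma>0$ we have $\lambda_2\ge -1$'' does not actually follow from its own inequality. Your direct use of $\mu\ge -k$ is precisely what is needed to close that gap, so your argument is not just an alternative but a repair of the paper's lower-bound step; your closing remark about why regularity is essential (the normalization otherwise fails to commute with $A$) is also accurate.
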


\begin{remark}
    The eigenvalues will lie in $[-1, 1]$ whether the self-loops or parallel edges are added in a regular graph. The range of eigenvalues will remain unaffected with the addition of self-loops or parallel edges. 
\end{remark}

\begin{theorem}
\label{k_regular_self_loop}
Consider a $k$-regular graph with $\alpha_1, \alpha_2 \in \mathbb{R}^{+}$ where $\alpha_1 \le \alpha_2$, then the inequality will hold $\lambda^{i}_{\alpha_1} \ge \lambda^{i}_{\alpha_2}, \forall \,\, 1 \le i \le n$ where $\lambda^{i}_{\alpha_1}$ and $\lambda^{i}_{\alpha_2}$ are the $i^{th}$ eigenvalues of $\Tilde{L}_{\alpha_1}$ and $\Tilde{L}_{\alpha_2}$ respectively. 
\end{theorem}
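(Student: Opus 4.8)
The plan is to exploit the fact that in a $k$-regular graph the degree matrix is a scalar multiple of the identity, so all the matrices $\Tilde{L}_{\alpha}$ are simultaneously diagonalizable with the eigenvectors of $A$. Concretely, since $D = k\,\text{I}$, we have $\Tilde{D}_{\alpha} = (k+\alpha)\text{I}$ and $\Tilde{A}_{\alpha} = A + \alpha\text{I}$, hence
\begin{equation}
\Tilde{L}_{\alpha} = \text{I} - \Tilde{D}_{\alpha}^{-\frac{1}{2}} \Tilde{A}_{\alpha} \Tilde{D}_{\alpha}^{-\frac{1}{2}} = \text{I} - \frac{1}{k+\alpha}\left(A + \alpha\,\text{I}\right) = \frac{k}{k+\alpha}\,\text{I} - \frac{1}{k+\alpha}\,A .
\end{equation}
So if $\mu_1 \ge \mu_2 \ge \cdots \ge \mu_n$ denote the eigenvalues of $A$ (with a fixed orthonormal eigenbasis, which is independent of $\alpha$), then the eigenvalue of $\Tilde{L}_{\alpha}$ on the same eigenvector is $\dfrac{k - \mu_i}{k + \alpha}$.

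The first step is to pin down the eigenvalue ordering so that the claim $\lambda^i_{\alpha_1} \ge \lambda^i_{\alpha_2}$ is about matched pairs. Because the scalar map $\mu \mapsto \frac{k-\mu}{k+\alpha}$ is strictly decreasing in $\mu$ for every $\alpha > 0$, the $i^{\text{th}}$ largest eigenvalue of $\Tilde{L}_{\alpha}$ is exactly $\lambda^i_{\alpha} = \frac{k - \mu_{n+1-i}}{k+\alpha}$; in particular the ordering of the $\lambda^i_{\alpha}$ is the same for all $\alpha$, and the $i^{\text{th}}$ eigenvalue of $\Tilde{L}_{\alpha_1}$ and of $\Tilde{L}_{\alpha_2}$ correspond to the same eigenvector of $A$.

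The second step is the monotonicity in $\alpha$. For a $k$-regular graph $\mu_{n+1-i} \le k$ (with equality only for the all-ones vector on each connected component), so the numerator $k - \mu_{n+1-i} \ge 0$. Since $\alpha_1 \le \alpha_2$ gives $k + \alpha_1 \le k + \alpha_2$ and both are positive, we get $\frac{1}{k+\alpha_1} \ge \frac{1}{k+\alpha_2}$, and multiplying by the nonnegative number $k - \mu_{n+1-i}$ yields
\begin{equation}
\lambda^i_{\alpha_1} = \frac{k - \mu_{n+1-i}}{k+\alpha_1} \ge \frac{k - \mu_{n+1-i}}{k+\alpha_2} = \lambda^i_{\alpha_2}
\end{equation}
for every $1 \le i \le n$, which is the desired inequality.

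There is essentially no hard obstacle here once the regularity is used: the only point requiring a little care is the bookkeeping of the eigenvalue indexing (ensuring "$i^{\text{th}}$ eigenvalue" is consistent across $\alpha_1$ and $\alpha_2$), which is handled by the observation that the common eigenbasis of $A$ diagonalizes every $\Tilde{L}_{\alpha}$ and the index-reversing map $\mu \mapsto (k-\mu)/(k+\alpha)$ is order-reversing but $\alpha$-independent in its ordering. One should also note the nonnegativity $k - \mu_i \ge 0$ explicitly, since it is what makes the inequality go the right way rather than reverse; this is exactly the statement that the eigenvalues of $D^{-1/2}AD^{-1/2}$ lie in $[-1,1]$ for a regular graph, already available from the earlier lemmas.
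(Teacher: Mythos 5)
Your proof is correct and follows essentially the same route as the paper's: both exploit $D = k\,\text{I}$ to diagonalize every $\Tilde{L}_{\alpha}$ in a common eigenbasis of $A$ and then compare scalar functions of $\alpha$. Your execution is somewhat cleaner — you write the Laplacian eigenvalue explicitly as $(k-\mu)/(k+\alpha)$ and get nonnegativity directly from $\mu \le k$, whereas the paper equates two implicit expressions for the adjacency eigenvalue and invokes Lemma \ref{lem_sl} for the needed bound; you also make the eigenvalue-index matching explicit, which the paper leaves implicit.
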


\begin{theorem}
\label{k_regular_parallel_edge}
   Consider a $k$-regular graph with $\gamma_1, \gamma_2 \in \mathbb{R}^{+}$ with $\gamma_1 \le \gamma_2$, then the inequality will hold $\lambda^{i}_{\gamma_1} \le \lambda^{i}_{\gamma_2}, \forall \,\, 1 \le i \le n$ where $\lambda^{i}_{\gamma_1}$ and $\lambda^{i}_{\gamma_2}$ are the $i^{th}$ eigenvalues of $\Tilde{L}_{\gamma_1}$ and $\Tilde{L}_{\gamma_2}$ respectively. 
\end{theorem}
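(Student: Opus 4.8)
The plan is to exploit $k$-regularity to reduce the whole statement to a one-variable scalar inequality. Since $\mathcal{G}$ is $k$-regular, $D = kI$, so after adding $\gamma$-times parallel edges (together with the single self-loop built into the construction of Table~\ref{tab:notation_table}) we have $\Tilde{A}_{\gamma} = (\gamma+1)A + I$ and $\Tilde{D}_{\gamma} = \bigl((\gamma+1)k + 1\bigr) I$, a scalar multiple of the identity. Hence
\begin{equation}
\Tilde{L}_{\gamma} = I - \frac{1}{(\gamma+1)k+1}\bigl((\gamma+1)A + I\bigr),
\end{equation}
which is an affine function of $A$. Therefore $\Tilde{L}_{\gamma}$ and $A$ share a common orthonormal eigenbasis that does not depend on $\gamma$, and if $\mu$ is an eigenvalue of $A$ the associated eigenvalue of $\Tilde{L}_{\gamma}$ is
\begin{equation}
\lambda_{\gamma}(\mu) = 1 - \frac{(\gamma+1)\mu + 1}{(\gamma+1)k+1} = \frac{(\gamma+1)(k-\mu)}{(\gamma+1)k+1}.
\end{equation}

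Next I would fix the indexing so that ``the $i$-th eigenvalue'' is unambiguous across different values of $\gamma$. For every fixed $\gamma$, the map $\mu \mapsto \lambda_{\gamma}(\mu)$ is strictly decreasing on $[-k,k]$, so listing the eigenvalues of $\Tilde{L}_{\gamma}$ in non-increasing order corresponds exactly to listing the eigenvalues $\mu_1 \le \cdots \le \mu_n$ of $A$ in non-decreasing order, and this correspondence is the same for all $\gamma$. Consequently it suffices to fix one eigenvalue $\mu = \mu_j$ of $A$ and show that $\gamma \mapsto \lambda_{\gamma}(\mu)$ is non-decreasing.

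For the monotonicity step, put $t = \gamma + 1 \ge 1$ and rewrite
\begin{equation}
\lambda_{\gamma}(\mu) = \frac{t(k-\mu)}{tk+1} = (k-\mu)\cdot\frac{1}{k + 1/t}.
\end{equation}
By $k$-regularity $\mu \le k$, so $k-\mu \ge 0$; and $1/t$ is decreasing in $t$, so $1/(k+1/t)$ is non-decreasing in $t$. Hence $\lambda_{\gamma}(\mu)$ is non-decreasing in $\gamma$, and $\gamma_1 \le \gamma_2$ yields $\lambda^{i}_{\gamma_1} \le \lambda^{i}_{\gamma_2}$ for every $i$ under the common ordering fixed above. (The same template, with the scalar factor replaced by its self-loop analogue, proves Theorem~\ref{k_regular_self_loop}, with the inequality reversed.)

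The computation is short; the one place where a careless argument could break is the eigenvalue-matching step — one must verify that increasing $\gamma$ cannot permute the order of the eigenvalues, which is precisely what the strict monotonicity of $\mu \mapsto \lambda_{\gamma}(\mu)$ guarantees. I would also note explicitly that $k$-regularity is used essentially, since it is what makes $\Tilde{D}_{\gamma}$ a scalar matrix and $\Tilde{L}_{\gamma}$ affine in a single fixed matrix; for a general graph the eigenvectors of $\Tilde{D}_{\gamma}^{-1/2}\Tilde{A}_{\gamma}\Tilde{D}_{\gamma}^{-1/2}$ vary with $\gamma$ and the clean per-index monotonicity need not persist, which is consistent with the mixed performance trends recorded for the benchmark graphs.
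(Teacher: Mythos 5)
Your proposal is correct and follows essentially the same route as the paper: both exploit $k$-regularity to make $\Tilde{D}_{\gamma}$ a scalar matrix, so that $\Tilde{L}_{\gamma}$ is affine in $A$ with a $\gamma$-independent eigenbasis, and then reduce the claim to monotonicity of a scalar function of $\gamma$. Your version is somewhat cleaner in execution — you write the Laplacian eigenvalue directly as $\lambda_{\gamma}(\mu)=\frac{(\gamma+1)(k-\mu)}{(\gamma+1)k+1}$ and you explicitly justify the index matching via the strict monotonicity of $\mu\mapsto\lambda_{\gamma}(\mu)$, a point the paper's proof (which instead equates two expressions for $\lambda_A$ and invokes Lemma \ref{lem_pe} to bound $\lambda_{\gamma_2}$) leaves implicit.
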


\begin{remark}
We have shown that for the regular graphs, the eigenvalues of the symmetrically normalized graph Laplacian decrease concurrently when self-loops are added. Adding self-loops attenuates the graph spectrum frequencies, thereby shifting the graph spectrum towards zero eigenvalue. This will also increase the number of lower frequencies and decrease the number of higher frequencies.  

The eigenvalues of the symmetrically normalized graph Laplacian increase when parallel edges are added. Adding parallel edges amplifies the frequencies of the graph spectrum, which shifts the spectrum toward the value $2$. Consequently, the number of lower frequencies decreases and the number of higher frequencies increases. 
\end{remark}

\begin{corollary}
\label{cor1_sl_for_pe}
    The increase in the eigenvalues of $L_{\gamma}$ is independent of the number of self-loop additions in $\mathcal{G}$. On the contrary, the eigenvalues of $\Tilde{L}_{\gamma}$ will increase if at least one self-loop is added per node in $\mathcal{G}$. 
\end{corollary}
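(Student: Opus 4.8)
The plan is to separate the statement into its two clauses and, in each, isolate the algebraic role of the self-loop term; no deep machinery is needed beyond the identities already set up in the ``Addition of Parallel edges'' subsection together with Lemma~\ref{max_parallel_edge} and Theorem~\ref{k_regular_parallel_edge}.

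\emph{Unnormalized Laplacian.} First I would record that adding $\gamma$ parallel edges replaces $A$ by $(\gamma+1)A$ and $D$ by $(\gamma+1)D$, so that $L_\gamma=(\gamma+1)D-(\gamma+1)A=(\gamma+1)L$ and hence $\lambda_i(L_\gamma)=(\gamma+1)\lambda_i(L)$, which strictly exceeds $\lambda_i(L)$ for every nonzero eigenvalue. Now superimposing $\alpha$ self-loops at every node adds $\alpha\text{I}$ to \emph{both} the adjacency and the degree matrix, so $\big((\gamma+1)D+\alpha\text{I}\big)-\big((\gamma+1)A+\alpha\text{I}\big)=(\gamma+1)L=L_\gamma$: the unnormalized Laplacian — and therefore its whole spectrum — is literally unchanged by self-loops (this is exactly the phenomenon in Figure~\ref{fig:enter-label}(a)). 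Consequently the increase produced by the parallel edges is the factor $(\gamma+1)$ irrespective of how many self-loops are present, which is the first assertion.

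\emph{Normalized Laplacian.} Here the point is that the normalization can annihilate the effect of the parallel edges unless a self-loop breaks the homogeneity. If no self-loop is added, then $\tilde D_\gamma^{-1/2}\tilde A_\gamma\tilde D_\gamma^{-1/2}$ with $\tilde A_\gamma=(\gamma+1)A$ and $\tilde D_\gamma=(\gamma+1)D$ equals $D^{-1/2}AD^{-1/2}$ because the scalar $(\gamma+1)$ cancels, so $\tilde L_\gamma=\text{I}-D^{-1/2}AD^{-1/2}=\tilde L$ for all $\gamma$ and no eigenvalue moves. Once at least one self-loop per node is present the $+\text{I}$ term no longer cancels: for a $k$-regular graph $\tilde D_\gamma=\big((\gamma+1)k+1\big)\text{I}$, so with $\mu_i$ the eigenvalues of $A$ the eigenvalues of $\tilde L_\gamma$ are $\frac{(\gamma+1)(k-\mu_i)}{(\gamma+1)k+1}=\frac{k-\mu_i}{\,k+1/(\gamma+1)\,}$, strictly increasing in $\gamma$ whenever $k>\mu_i$ (and $\equiv 0$ when $k=\mu_i$). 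The term-by-term monotone increase for general regular graphs is Theorem~\ref{k_regular_parallel_edge}, and for an arbitrary (irregular) graph the increase of the largest eigenvalue follows from the bound of Lemma~\ref{max_parallel_edge}; together these give the second assertion.

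The main obstacle is interpretive rather than computational: one must state precisely what ``the increase in the eigenvalues of $L_\gamma$ is independent of the number of self-loop additions'' means — namely that $L_\gamma$ itself, hence its spectrum, is self-loop invariant — and then make the contrast sharp by tracking whether the extra $\alpha\text{I}$ survives the two-sided scaling $\tilde D_\gamma^{-1/2}(\cdot)\tilde D_\gamma^{-1/2}$; it cancels exactly when $\alpha=0$ and persists otherwise. A minor secondary point is that away from the $k$-regular setting we can only appeal to Lemma~\ref{max_parallel_edge}, i.e.\ control the top of the spectrum rather than compare all $n$ eigenvalues.
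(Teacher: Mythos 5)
Your proposal is correct and follows essentially the same route as the paper: the first clause comes from the cancellation of the $\alpha\text{I}$ terms in $\tilde{D}_{\gamma}-\tilde{A}_{\gamma}=(\gamma+1)L$, and the second from the cancellation of the scalar $(\gamma+1)$ in $\tilde{D}_{\gamma}^{-\frac{1}{2}}\tilde{A}_{\gamma}\tilde{D}_{\gamma}^{-\frac{1}{2}}$ when no self-loop is present, which is exactly the paper's argument. Your only addition is to make explicit the positive direction (that the eigenvalues of $\tilde{L}_{\gamma}$ do increase once a self-loop survives the normalization) by invoking Theorem~\ref{k_regular_parallel_edge} and Lemma~\ref{max_parallel_edge}, a step the paper leaves implicit.
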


In the following two theorems, we will demonstrate the alteration of the eigenvalues of $A_{N}$ with the addition of self-loops or parallel edges by the perturbation of the adjacency matrix.

\begin{theorem}
\label{approx_sl_thm}
Consider a connected graph $\mathcal{G}$ with $A_N=D^{-\frac{1}{2}} A D^{-\frac{1}{2}}$. Assuming the diagonal of $A$ of $G$ is perturbed by a significantly small $\alpha > 0$, then the updated normalized adjacency matrix will be $A_{N}^{\alpha}$. The change in the eigenvalues of $A_N^{\alpha}$ with respect to the eigenvalues of $A_{N}$ will increase when $\alpha$ increases.
\end{theorem}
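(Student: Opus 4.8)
The plan is to treat the diagonal perturbation as a small symmetric additive perturbation of $A_N$ and read off the eigenvalue drift from first-order perturbation theory. First I would make the perturbation explicit: perturbing the diagonal of $A$ by $\alpha$ gives $A + \alpha \mathrm{I}$, and normalizing by the \emph{unchanged} degree matrix $D$ yields
\begin{equation}
A_N^{\alpha} \;=\; D^{-\frac12}\big(A + \alpha \mathrm{I}\big)D^{-\frac12} \;=\; A_N + \alpha D^{-1}.
\end{equation}
So the perturbation matrix is $E = D^{-1}$, which is symmetric and, because $\mathcal{G}$ is connected (hence every $d_j \ge 1$), positive definite. This is the key structural fact the whole argument hinges on, and it also explains why the behaviour here is opposite to that of $\Tilde{L}_{\alpha}$ in Theorem~\ref{k_regular_self_loop}: there the degree matrix is updated along with $A$, whereas here $D$ is held fixed.

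Next I would invoke analytic perturbation theory. Since $A_N$ is real symmetric, $\alpha \mapsto A_N + \alpha E$ is an analytic family of symmetric matrices, so by Rellich's theorem the eigenvalues can be chosen as analytic functions $\lambda_i(\alpha)$ with $\lambda_i(0)=\lambda_i$, and the Hellmann--Feynman formula gives $\lambda_i'(0) = u_i^{\top} E\, u_i$ for a suitable orthonormal eigenbasis $\{u_i\}$ of $A_N$. Hence
\begin{equation}
\Delta\lambda_i(\alpha) \;=\; \lambda_i(\alpha) - \lambda_i \;=\; \alpha\, u_i^{\top} D^{-1} u_i + O(\alpha^2) \;=\; \alpha \sum_{j=1}^{n} \frac{(u_i)_j^2}{d_j} + O(\alpha^2),
\end{equation}
and since every $d_j>0$ and $u_i \neq 0$ the coefficient $u_i^{\top} D^{-1}u_i$ is strictly positive, so for sufficiently small $\alpha$ the change $\Delta\lambda_i(\alpha)$ is positive and strictly increasing (indeed, to leading order linear) in $\alpha$ — which is the assertion. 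To make ``significantly small $\alpha$'' quantitative and to obtain an unconditional version of the monotonicity, I would add the Weyl estimates: from $A_N^{\alpha_2} = A_N^{\alpha_1} + (\alpha_2-\alpha_1)D^{-1}$ with $(\alpha_2-\alpha_1)D^{-1}\succeq 0$ one gets $\lambda_i(\alpha_2)\ge\lambda_i(\alpha_1)$ for all $\alpha_1\le\alpha_2$, hence $\Delta\lambda_i$ is nondecreasing in $\alpha$ globally, while $|\Delta\lambda_i(\alpha)|\le \alpha\|D^{-1}\|_2 = \alpha/\min_j d_j$ identifies the regime (set by the spectral gap of $A_N$ and $\min_j d_j$) in which the linear term dominates the $O(\alpha^2)$ remainder.

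The main obstacle is degeneracy of the spectrum of $A_N$: if $\lambda_i$ is a repeated eigenvalue, the bare formula $\lambda_i'(0)=u_i^{\top}Eu_i$ is ill-posed, and the correct first-order corrections are the eigenvalues of the compression $P_{\mathcal E}\,D^{-1}\,P_{\mathcal E}$ of $E$ to the eigenspace $\mathcal E$ of $\lambda_i$. I would dispose of this by noting that the compression of a positive-definite operator to any subspace is again positive definite, so all of these corrections are still strictly positive; every branch emanating from a degenerate $\lambda_i$ therefore increases with $\alpha$ and the conclusion needs no simplicity hypothesis. The remaining work — verifying the Rellich/Hellmann--Feynman expansion and controlling the quadratic remainder — is routine and I would not carry it out in detail here.
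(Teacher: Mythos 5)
There is a genuine gap, and it comes from analyzing the wrong perturbation. You take the perturbed matrix to be $A_N^{\alpha}=D^{-\frac12}(A+\alpha \mathrm{I})D^{-\frac12}=A_N+\alpha D^{-1}$, i.e.\ you renormalize by the \emph{original} degree matrix. In the paper (see the notation table and its proof of Theorem~\ref{approx_sl_thm}) the perturbed matrix is $A_N^{\alpha}=\Tilde{D}_{\alpha}^{-\frac12}\Tilde{A}_{\alpha}\Tilde{D}_{\alpha}^{-\frac12}$ with $\Tilde{D}_{\alpha}=D+\alpha \mathrm{I}$: adding self-loops changes the degrees, so the normalization changes too. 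This destroys the single structural fact your argument hinges on. With the updated normalization the entrywise change is $\delta A_N[i][j]=A_{ij}\bigl((\alpha+d_i)^{-1/2}(\alpha+d_j)^{-1/2}-(d_id_j)^{-1/2}\bigr)<0$ off the diagonal and $\alpha/(\alpha+d_i)>0$ on the diagonal, so the perturbation is \emph{not} $\alpha D^{-1}$ and is not positive semidefinite; the Hellmann--Feynman coefficient $u_i^{\top}(\delta A_N)u_i$ has competing signs, and your Weyl monotonicity claim no longer follows from positivity of the increment. The paper's proof confronts exactly this split: it writes $x^{\top}(\delta A_N)x$ as the negative off-diagonal part $\sum_{i\neq j}F^{(1)}_{ij}(\alpha)A_{ij}x_ix_j$ plus the positive diagonal part $\sum_i\frac{\alpha}{\alpha+d_i}x_i^2$, and then invokes the hypothesis $d_i,d_j\gg\alpha$ to discard the former. (Incidentally, that discarding is itself delicate: for a $k$-regular graph the two parts are of the same order and combine to $\frac{\alpha}{\alpha+k}(1-\lambda)$, so the off-diagonal term contributes the factor $(1-\lambda)$ rather than vanishing; nonnegativity is rescued only because $|\lambda|\le 1$.) Your proof never meets this cancellation because your model removes it by construction.

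Within your own model the argument is clean and in fact more rigorous than the paper's (Rellich/Hellmann--Feynman, the compression argument for degenerate eigenvalues, and Weyl's inequality for global monotonicity are all correct for $E=D^{-1}\succ 0$). But your aside that the behaviour here is ``opposite'' to Theorem~\ref{k_regular_self_loop} because ``there the degree matrix is updated along with $A$, whereas here $D$ is held fixed'' is the symptom of the misreading: in both results the degree matrix is updated, and in both the eigenvalues of the normalized adjacency matrix increase with $\alpha$ while those of $\Tilde{L}_{\alpha}$ decrease. To repair the proof you would need to work with the paper's $\delta A_N$ and explicitly bound the negative off-diagonal contribution against the positive diagonal one (e.g.\ by expanding $F^{(1)}_{ij}(\alpha)$ to first order in $\alpha$ and using $x^{\top}A_Nx=\lambda\in[-1,1]$), rather than asserting positive definiteness of the perturbation.
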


\begin{theorem}
\label{approx_pe_thm}
Consider a connected graph $\mathcal{G}$ with normalized adjacency matrix $A_N=D^{-\frac{1}{2}} A D^{-\frac{1}{2}}$. Assuming each element of $A$ except the diagonal is multiplied by $1+\gamma$ where $\gamma > 0$ is a significantly small quantity, then the updated normalized adjacency matrix will be $A_{N}^{\gamma}$. The change in the eigenvalues of $A_{N}^{\gamma}$ with respect to the eigenvalues of $A_N$ will decrease when $\gamma$ increases.
\end{theorem}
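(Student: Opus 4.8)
The plan is to treat the multiplication of the off-diagonal part of $A$ by $(1+\gamma)$ as a small symmetric perturbation of $A_N$ and apply first-order (Rayleigh--Schrödinger) eigenvalue perturbation theory, exactly as one would for Theorem~\ref{approx_sl_thm}. First I would write $\Tilde{A}_{\gamma} = (1+\gamma)A + \mathrm{I}$, so that relative to the ordinary normalized adjacency $A_N = D^{-1/2} A D^{-1/2}$ the perturbation acts both in the numerator and through the degree matrix $\Tilde{D}_{\gamma} = (1+\gamma)D + \mathrm{I}$. Since $\gamma$ is ``significantly small,'' I would linearize: expand $\Tilde{D}_{\gamma}^{-1/2} = ((1+\gamma)D+\mathrm{I})^{-1/2}$ to first order in $\gamma$ around the base point, and similarly expand the product $\Tilde{D}_{\gamma}^{-1/2}\Tilde{A}_{\gamma}\Tilde{D}_{\gamma}^{-1/2}$, keeping only the $O(\gamma)$ term. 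Write this as $A_N^{\gamma} = A_N + \gamma E + O(\gamma^2)$ for an explicit symmetric matrix $E$ built from $D$, $A$, and $\mathrm{I}$.

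Next, for each eigenpair $(\mu_i, u_i)$ of $A_N$ with $\|u_i\|=1$ (the graph is connected, so if I work with simple eigenvalues the first-order formula is clean; otherwise I invoke the standard perturbation formula for the eigenvalue branch), the first-order change is $\mu_i(\gamma) - \mu_i \approx \gamma\, u_i^{\top} E\, u_i$. The key step is then to show $u_i^{\top} E u_i \le 0$ (or $<0$), i.e. that the leading correction is non-positive, so that the eigenvalues of $A_N^{\gamma}$ decrease as $\gamma$ grows — which, since $\Tilde{L}_{\gamma} = \mathrm{I} - A_N^{\gamma}$, is the mirror image of the self-loop case and consistent with Theorem~\ref{k_regular_parallel_edge}. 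I would compute $E$ componentwise: the contribution from scaling the numerator by $(1+\gamma)$ pushes $u_i^\top E u_i$ toward $+\mu_i$, while the two factors of $\Tilde{D}_{\gamma}^{-1/2}$ each contribute a term proportional to $-\tfrac12 \sum_j (u_i)_j^2 \, d_j/(\,\cdot\,)$ against the diagonal weighting; collecting these and using $A_N u_i = \mu_i u_i$ to re-express the cross terms, the $+\mu_i$ piece should cancel against part of the degree-derivative piece, leaving a manifestly non-positive quadratic form (a weighted variance-type expression in the entries of $u_i$ relative to the degree distribution). This cancellation is the crux and should parallel, with opposite sign, the computation behind Theorem~\ref{approx_sl_thm}.

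The main obstacle I anticipate is twofold. First, bookkeeping: unlike the pure self-loop perturbation, here $\gamma$ enters \emph{both} $\Tilde{A}_\gamma$ and $\Tilde{D}_\gamma$, so the first-order term $E$ has three pieces that must be combined carefully before any sign is visible — a premature bound on a single piece will not work. Second, the sign argument only rigorously controls the \emph{first-order} term, so to make the statement ``the change decreases when $\gamma$ increases'' precise I would either (i) restrict to the regime of small $\gamma$ where the linear term dominates the $O(\gamma^2)$ remainder, citing the analyticity of eigenvalues of the analytic family $\gamma \mapsto A_N^\gamma$ on the relevant interval, or (ii) upgrade to a monotonicity statement by differentiating $\mu_i(\gamma)$ in $\gamma$ for all $\gamma>0$ and showing $\tfrac{d}{d\gamma}\mu_i(\gamma) \le 0$ throughout, which is the cleaner route and connects directly to Theorem~\ref{k_regular_parallel_edge} in the $k$-regular special case (where $A_N^\gamma$ commutes with $A_N$ and everything is explicit). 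I would present route (ii) if the derivative stays sign-definite, and fall back to (i) otherwise.
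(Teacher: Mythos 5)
Your plan shares the paper's starting point---first-order eigenvalue perturbation via the quadratic form $x^{\top}(\delta A_N)x$ for a normalized eigenvector $x$ of $A_N$---but it diverges at the two places that actually matter, and at both there is a genuine problem. First, the expansion $A_N^{\gamma}=A_N+\gamma E+O(\gamma^2)$ is not available. Under the paper's convention the perturbed matrix is $\Tilde{D}_{\gamma}^{-1/2}\Tilde{A}_{\gamma}\Tilde{D}_{\gamma}^{-1/2}$ with $\Tilde{A}_{\gamma}=(1+\gamma)A+\mathrm{I}$ and $\Tilde{D}_{\gamma}=(1+\gamma)D+\mathrm{I}$, so at $\gamma=0$ it equals $(D+\mathrm{I})^{-1/2}(A+\mathrm{I})(D+\mathrm{I})^{-1/2}\neq A_N$: the difference from $A_N$ is $O(1)$, not $O(\gamma)$, because a unit self-loop is baked into the baseline. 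If instead you take the theorem statement literally and omit the self-loop, the normalization cancels the factor $(1+\gamma)$ exactly and $A_N^{\gamma}\equiv A_N$ (this is precisely Corollary \ref{cor1_sl_for_pe}), so your $E$ would be identically zero and there would be nothing to prove. Second, the crux of your argument---that the first-order coefficient satisfies $u_i^{\top}Eu_i\le 0$ via a cancellation leaving a ``weighted variance-type'' form---is announced but not carried out. Differentiating entrywise, the off-diagonal entries of $A_N^{\gamma}$ are increasing in $\gamma$ while the diagonal entries are decreasing, so the derivative matrix has positive off-diagonal and negative diagonal entries and the sign of $u^{\top}Eu$ is not evident for a general eigenvector $u$; outside the $k$-regular case (where everything commutes and the derivative is $(\lambda_A-k)/((1+\gamma)k+1)^2\le 0$) you have not shown the claimed sign.

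For comparison, the paper's proof does something cruder and proves a different-sounding conclusion. It forms $\delta A_N=A_N^{\gamma}-A_N$ entrywise against the self-loop-free baseline, observes that the off-diagonal contribution $\bigl((1+\gamma)/\sqrt{(1+(1+\gamma)d_i)(1+(1+\gamma)d_j)}\bigr)-1/\sqrt{d_id_j}=\bigl(( \tfrac{1}{1+\gamma}+d_i)(\tfrac{1}{1+\gamma}+d_j)\bigr)^{-1/2}-1/\sqrt{d_id_j}\approx 0$ when $d_i,d_j\gg 1$ and $\gamma$ is small, and is left with the diagonal term $\sum_i x_i^2/(1+(1+\gamma)d_i)$. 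This quantity is positive but monotonically decreasing in $\gamma$, which is exactly the paper's reading of ``the change in the eigenvalues of $A_N^{\gamma}$ decreases as $\gamma$ increases.'' So the surviving term in the paper's argument is the self-loop contribution you would have discarded, and the monotonicity is in the size of a positive shift rather than in the sign of a derivative. If you want to pursue your route (ii), the honest version is to prove $\frac{d}{d\gamma}\mu_i(\gamma)\le 0$ for the family $\gamma\mapsto\Tilde{D}_{\gamma}^{-1/2}\Tilde{A}_{\gamma}\Tilde{D}_{\gamma}^{-1/2}$ directly; that would be a cleaner and stronger statement than the paper's, but it is not established by what you have written.
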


\begin{remark}
    Theorem \ref{approx_sl_thm} suggests that the change in the eigenvalues of the normalized adjacency matrix increases an increasing number of self-loops which also signify the decrease in the change in the eigenvalues of $\Tilde{L}_{\alpha}$. The small perturbation in the adjacency matrix leads to a shrinking of the frequencies in the graph spectrum. The spectrum will shift toward the zero eigenvalue. Spectrum shrinking also highlights the alteration in the parity of frequencies. The number of lower frequencies increases and the number of higher frequencies decreases.  
    
    Conversely, Theorem \ref{approx_pe_thm} states that the change of eigenvalues of the normalized adjacency matrix decreases with the increasing number of parallel edges. This indicates an increase in the change in the eigenvalues of $\Tilde{L}_{\gamma}$, highlighting the spectrum expansion. To this effect, the graph spectrum will shift toward the value $2$. Furthermore, the parity of frequencies changes, specifically, the number of lower frequencies decreases and the number of higher frequencies increases. 
\end{remark}

\subsection{Effect on the Performance of GCN for Spectral Shifts}
The theoretical analyses offer insights into the effects on graph spectra when self-loops or parallel edges are added. We also observed that spectrum shifts either to zero eigenvalue or to the value $2$ and also alter the parity of the frequencies or eigenvalues. The higher number of low frequencies in a network indicates the presence of well-separated communities, mostly sharing identical node labels. GCN or any low-pass filter is supposed to smooth out the features of the connected nodes, assuming that they share similar class labels. The addition of self-loops increases the number of lower frequencies which appears to be beneficial for smoothing out features, improving the performance of GCN.

Conversely, the graph spectra shift toward the value $2$, signifying the increase in the higher frequencies. A network with overlapped communities contains many high frequencies in the spectrum. GCN is poised to smooth out the features of nodes that are probably having different class labels. This will lead to the degradation of the performance of GCN.

\subsection{Connection between Theoretical Analysis and Performance Trends}
The various performance trends depend on the presence of parity between lower and higher frequencies in the spectrum of the input graph. As discussed in the theoretical analyses, the addition of self-loops and parallel edges respectively decreases and increases the eigenvalues or frequencies in the spectrum. If GCN witnessed an increasing trend on a heterophilic graph with the gradual addition of self-loops, then we can conclude that initially the graph spectrum is aligned toward zero eigenvalue and the addition of self-loops further tilts the balance to lower frequencies. The similar effects are evident in datasets like Actor, Cornell, pokec, etc. In contrast, if the performance of GCN exacerbates with the gradual addition of parallel edges, then we can infer initial graph spectrum is skewed towards the maximum value $2$, indicating a greater number of higher frequencies. Further addition of parallel edges enhances the high frequencies and the decreasing trend observed in the performance of GCN. The impact is evident in datasets like Actor, arxiv-year, snap-patents, Tolokers, etc.



\section{Experiments}

\subsection{Datasets}
Our experiments encompass three categories of datasets (1) Pie \textit{et al.} \cite{geomgcn} proposed $6$ standard heterophilic networks consisting of Cornell, Texas, Wisconsin, Chameleon, Squirrel, and Actor, (2) Lim \textit{et al.} \cite{linkx} curated $6$ large-scale heterophilic networks namely arxiv-year, snap-patents, Penn94, pokec, twitch-gamers, and genius, and (3) Platonov \textit{et al.} \cite{new_heterophily} identified prevailing shortcomings on the existing datasets and developed a set of $5$ heterophilic networks viz Roman-empire, Amazon-ratings, Minesweeper, Tolokers, and Questions. The details of all datasets are vividly available in Table \ref{tab:isolated}.   

\subsection{Experimental Settings}
For all graphs from Pie \textit{et al.}, we have considered $10$ standard train/valid/test splits with $60\%/20\%/20\%$ samples. Graphs from Lim \textit{et al.} and Platonov \textit{et al.} train/valid/test splits are fixed as $50\%/25\%/25\%$ for $5$ and $10$ splits respectively. We applied a two-layered GCN architecture across all $17$ graphs to carry out the entire experiment. The dropout rate is fixed at $0.50$ and LayerNorm is employed to make training convergence faster. The model parameters are optimized by the Adam optimizer. We evaluated the best model on every split and finally reported the mean and standard deviations across all splits for each of the datasets. The performance metric is test accuracy, and for Minesweeper, Tolokers, and Questions, the ROC-AUC is reported. Our Pytorch and Pytorch-geometric based implementation is available at \href{https://github.com/kushalbose92/DIHG}{https://github.com/kushalbose92/DIHG}. 

\subsection{Analysis of Isolated Nodes, Sparsity, and Average Degree in the Heterophilic Networks}
We offer an in-depth analysis of the number of isolated nodes, edge density, and the average degree in the context of the heterophilic networks. Refer to Table \ref{tab:isolated} for illustrating the comparative statistics of the various networks. As per the existing formula, the edge density can be defined as $\text{sp}_{\mathcal{G}} = \frac{2|\mathcal{E}|}{|\mathcal{V}|(|\mathcal{V}|-1)}$. If the input graph is too sparse, the estimated value will be too small to comprehend. Therefore, we devise an alternative solution with the assistance of a logarithmic scale, which is defined as $\overline{\text{sp}}_{\mathcal{G}} = - \log (\text{sp} + \epsilon)$ where $\epsilon$ is a tiny real number added to avert the numerical instability. The scaling suggests that the lower the $\overline{\text{sp}}_{\mathcal{G}}$, the denser the graph is. A similar issue is confronted in the estimation of average degree as $d_{\text{avg}} = \frac{2|\mathcal{E}|}{|\mathcal{V}|}$. Therefore, we also pursue similar tricks to tackle the issue. The scaled average degree is defined as $\overline{d}_{\text{avg}} = - \log (d_{\text{avg}} + \epsilon)$ where $\epsilon$ is same as defined as earlier. A lower $\overline{d}_{\text{avg}}$ signifies the higher average degree of the underlying network.

\subsection{Category of Distribution of Eigenvalues}
Suppose we attempt to apply a low-pass filter (like GCN) on any input graph. The graph will be pre-processed either by adding a fixed number of self-loops or parallel edges. Considering all possibilities, the low-pass filter can exhibit four distinct types of performance trends. The various possible trends are vividly portrayed in Table \ref{tab:category}. We marked the categories respectively as A, B, C, and D. The different performance trends may be the manifestations of the underlying edge connectivity of the network. The structure of the networks has an inherent connection with the eigenvalues derived from the normalized graph Laplacian. Therefore, the defined categories may help to comprehend the parity of lower and higher frequencies (eigenvalues) of the graph spectrum. 

\begin{figure*}
    \centering
    \includegraphics[width=0.95\textwidth]{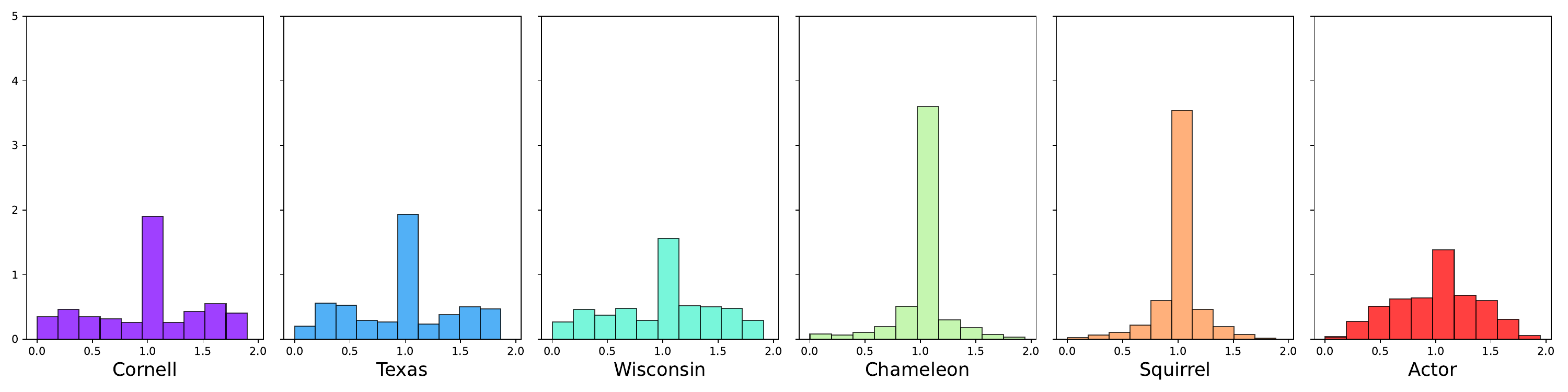}
    \caption{The distribution of eigenvalues of the normalized graph Laplacian presented for six standard heterophilic datasets obtained from \cite{geomgcn}. }
    \label{fig:init_eigen}
\end{figure*}

\begin{table*}[!ht]
\centering
\scriptsize
\caption{GCN (a low-pass filter) is applied on the $6$ large-scale heterophilic datasets curated by Lim \textit{et al}. \cite{linkx}. Performance analyses are demonstrated after adding multiple self-loops and parallel edges, respectively, in the graphs. The performance trends are also highlighted, and corresponding categories are marked for each dataset.}
\label{tab:large_lpf}
\resizebox{\columnwidth}{!}{
\begin{tabular}{l|c|cccccc}
\toprule
Method & Input Adjacency & arxiv-year & snap-patents & Penn94 & pokec & twitch-gamers & genius \\
\midrule 
\multirow{5}{*}{GCN} & A + I & $48.75\pm0.43$ & $34.96\pm0.15$ & $77.12\pm0.43$ & $59.53\pm0.18$ & $60.83\pm0.29$ & $80.03\pm0.66$ \\
& A + 2I & $48.59\pm0.53$ & $34.77\pm0.14$ & $75.82\pm0.45$ & $59.17\pm0.20$ & $61.00\pm0.34$ & $79.81\pm1.38$ \\
& A + 3I & $47.34\pm0.20$ & $34.60\pm0.04$ & $74,32\pm0.49$ & $59.77\pm0.15$ & $61.07\pm0.33$ & $78.77\pm0.99$ \\
& A + 4I & $46.26\pm0.21$ & $34.57\pm0.11$ & $72.79\pm0.55$ & $60.43\pm0.14$ & $61.18\pm0.26$ & $77.80\pm0.51$ \\
& A + 5I &  $45.73\pm0.48$ & $34.43\pm0.17$ & $71.39\pm0.40$ & $60.94\pm0.17$ & $61.23\pm0.30$ & $77.35\pm0.37$ \\
\midrule 
Trend & $\rightarrow$ & \cellcolor[HTML]{EE9792} Decreasing ($\downarrow$) & \cellcolor[HTML]{EE9792} Decreasing ($\downarrow$) & \cellcolor[HTML]{EE9792} Decreasing ($\downarrow$) & \cellcolor[HTML]{9BF398} Increasing ($\uparrow$) & \cellcolor[HTML]{9BF398} Increasing ($\uparrow$) & \cellcolor[HTML]{EE9792} Decreasing ($\downarrow$) \\ 
\midrule
\multirow{5}{*}{GCN} & A + I & $48.69\pm0.37$ & $35.00\pm0.15$ & $77.13\pm0.39$ & $59.54\pm0.23$ & $60.86\pm0.30$ & $80.09\pm0.70$ \\
& 2A + I & $48.31\pm0.54$ & $34.89\pm0.12$ & $77.64\pm0.41$ & $61.32\pm0.14$ & $60.72\pm0.21$ & $74.41\pm2.07$ \\
& 3A + I & $47.93\pm0.64$ & $34.64\pm0.23$ & $77.81\pm0.38$ & $62.22\pm0.10$ & $60.70\pm0.24$ & $69.88\pm0.48$ \\
& 4A + I & $47.74\pm0.52$ & $34.46\pm0.18$ & $77.88\pm0.37$ & $62.77\pm0.09$ & $60.70\pm0.21$ & $70.17\pm0.89$ \\
& 5A + I & $47.78\pm0.38$ & $34.28\pm0.13$ & $77.81\pm0.35$ & $63.10\pm0.10$ & $60.69\pm0.21$ & $71.13\pm1.26$ \\
\midrule
Trend & $\rightarrow$ & \cellcolor[HTML]{EE9792} Decreasing ($\downarrow$) & \cellcolor[HTML]{EE9792} Decreasing ($\downarrow$) & \cellcolor[HTML]{9BF398} Increasing ($\uparrow$) & \cellcolor[HTML]{9BF398} Increasing ($\uparrow$) & \cellcolor[HTML]{EE9792} Decreasing ($\downarrow$) & \cellcolor[HTML]{EE9792} Decreasing ($\downarrow$) \\
\midrule 
Category & $\rightarrow$ & D & D & C & A & B & D\\
\bottomrule
\end{tabular}}
\end{table*}

\subsection{Initial Distribution of Eigenvalues}
We estimate the distribution of eigenvalues from the normalized graph Laplacian for six standard heterophilic datasets Cornell, Texas, Wisconsin, Chameleon, Squirrel, and Actor, Refer to Figure \ref{fig:init_eigen} for the detailed illustration. The histograms of Cornell, Texas, and Wisconsin are identical. On the other side, the corresponding histograms of Chameleon and Squirrel carry similar patterns, The histogram of the Actor dataset is completely different compared to the rest of the others. Later we will observe that the datasets have histograms of similar patterns that will yield identical trends in the performances. 

\subsection{Performance Categorisation of Low-pass Filter}
We performed semi-supervised node classification on a diverse array of heterophilic graphs to analyze the different performance trends of the low-pass filter. For each graph, separate experiments were conducted to study the effects of the addition of self-loops and parallel edges respectively. We applied GCN, a well-adopted low-pass filter, on $17$ heterophilic graphs, and the respective performance trends are demonstrated in Tables \ref{tab:standard_lpf}, \ref{tab:large_lpf}, and \ref{tab:new_lpf} for respectively standard heterophilic graphs, large-scale datasets, and currently proposed heterophilic graphs. The study reveals that each graph exhibits a steady pattern of either increasing or decreasing while adding either the self-loops or the parallel edges. We further marked the categories considering the performance trends for the increasing number of self-loops and parallel edges. For instance, GCN on Chameleon showed a decreasing trend while increasing the number of parallel edges, and on the contrary, performance increases on adding the parallel edges. Therefore, Chameleon was assumed to be in the category of A as per the rules from Table \ref{tab:category}. Note that, the steady patterns are persistent across every dataset considered for the experimentation.  


\subsection{Observation from Performance Trends}
The rationale against the backdrop of the performance trend can be explained through the lens of analyzing the spectrum of the graph. Since GCN performs the low-pass filtering, the coefficients of the lower frequencies will be enhanced, and the coefficients of the higher frequencies will be shrunk. Based on this, our discussion will revolve around analyzing $6$ standard heterophilic graphs.

\begin{table*}[!ht]
\centering
\scriptsize
\caption{GCN (a low-pass filter) is applied on the $5$ newly-proposed heterophilic datasets curated by Platonov \textit{et al}. \cite{new_heterophily}. Performance analyses are demonstrated after adding multiple self-loops and parallel edges respectively in the graphs. The performance trends are also highlighted and corresponding categories are marked for each dataset.}
\label{tab:new_lpf}
\resizebox{\columnwidth}{!}{
\begin{tabular}{l|c|ccccc}
\toprule
Method & Input Adjacency & Roman-empire & Amazon-ratings & Minesweeper & Tolokers & Questions \\
\midrule 
\multirow{5}{*}{GCN} & A + I & $76.70\pm0.63$ & $42.01\pm0.58$ & $89.51\pm0.54$ & $80.10\pm1.11$ & $73.96\pm1.44$ \\
& A + 2I & $76.28\pm0.58$ & $41.58\pm0.56$ & $89.46\pm0.53$ & $80.13\pm1.06$ & $73.69\pm0.67$ \\
& A + 3I & $75.99\pm0.76$ & $41.56\pm0.53$ & $89.41\pm0.55$ & $80.03\pm1.02$ & $72.54\pm1.64$ \\
& A + 4I & $75.72\pm0.70$ & $41.49\pm0.40$ & $89.33\pm0.54$ & $79.76\pm1.02$ & $72.56\pm1.23$ \\
& A + 5I & $75.26\pm0.55$ & $41.21\pm0.60$ & $89.22\pm0.53$ & $79.70\pm0.98$ & $72.53\pm1.15$\\
\midrule
Trend & $\rightarrow$ & \cellcolor[HTML]{EE9792} Decreasing ($\downarrow$) & \cellcolor[HTML]{EE9792} Decreasing ($\downarrow$) & \cellcolor[HTML]{EE9792} Decreasing ($\downarrow$) & \cellcolor[HTML]{EE9792} Decreasing ($\downarrow$) & \cellcolor[HTML]{EE9792} Decreasing ($\downarrow$) \\
\midrule
\multirow{5}{*}{GCN} & A + I & $76.71\pm0.62$ & $42.00\pm0.58$ & $89.51\pm0.54$ & $80.10\pm1.12$ & $73.30\pm2.04$ \\
& 2A + I & $76.75\pm0.80$ & $41.93\pm0.50$ & $89.57\pm0.51$ & $79.95\pm1.08$ & $74.51\pm1.23$ \\
& 3A + I & $76.99\pm0.67$ & $41.93\pm0.90$ & $89.57\pm0.52$ & $79.87\pm0.93$ & $75.05\pm0.99$ \\
& 4A + I & $76.90\pm0.59$ & $42.02\pm0.53$ & $89.57\pm0.51$ & $79.87\pm0.99$ & $74.93\pm1.18$ \\
& 5A + I & $76.95\pm0.65$ & $42.18\pm0.77$ & $89.60\pm0.49$ & $79.75\pm0.94$ & $74.73\pm1.58$\\
\midrule
Trend & $\rightarrow$ & \cellcolor[HTML]{9BF398} Increasing ($\uparrow$) & \cellcolor[HTML]{9BF398} Increasing ($\uparrow$) & \cellcolor[HTML]{9BF398} Increasing ($\uparrow$) & \cellcolor[HTML]{EE9792} Decreasing ($\downarrow$) & \cellcolor[HTML]{9BF398} Increasing ($\uparrow$) \\
\midrule
Category & $\rightarrow$ & C & C & C & D & C \\
\bottomrule
\end{tabular}}
\end{table*}

\vspace{5pt}
\noindent
\textbf{Addition of Self-loops} We observed that the addition of self-loops improves the performance of GCN on Cornell, Texas, Wisconsin, and Actor (Refer Table \ref{tab:standard_lpf}). Aligning to the theoretical analyses, adding self-loops will create a decreasing trend of the eigenvalues of $\Tilde{L}$. Consequently, the number of lower frequencies will increase in the graph spectrum, which is beneficial for the performance boost of the GCN. Refer to Figure \ref{fig:self_loops}((a), (b), (c), (f)) to visualize the change of distribution of eigenvalues of $\Tilde{L}$ with the addition of self-loops. A deep observation reveals that the number of lower frequencies gradually increases while increasing the number of self-loops in the graph. This transformation offers a conducive environment for the operation of the low-pass filter (here GCN). Finally, the shape of the eigenvalue distribution resembles when $\alpha=5$ for all four datasets.  

A stark contrast was observed in the performance trends of the Chameleon and Squirrel datasets. Both of the datasets demonstrated degrading performance with the addition of an increasing number of self-loops. As mentioned earlier, the addition of self-loops will shrink the spectrum of the graph, leading to an increase in the lower frequencies. Refer \ref{fig:self_loops}((d), (e)) for the nuanced view of the alteration of eigenvalue distribution of $\Tilde{L}$ for both datasets. Both histograms depict that the lower frequencies increase but eigenvalues are mostly centered towards $1$ which might not be beneficial for the operation of GCN. Therefore, in this scenario, GCN witnessed deteriorating performance with the increasing number of self-loops.

\begin{figure*}[!ht]
    \centering
    \includegraphics[width=\textwidth]{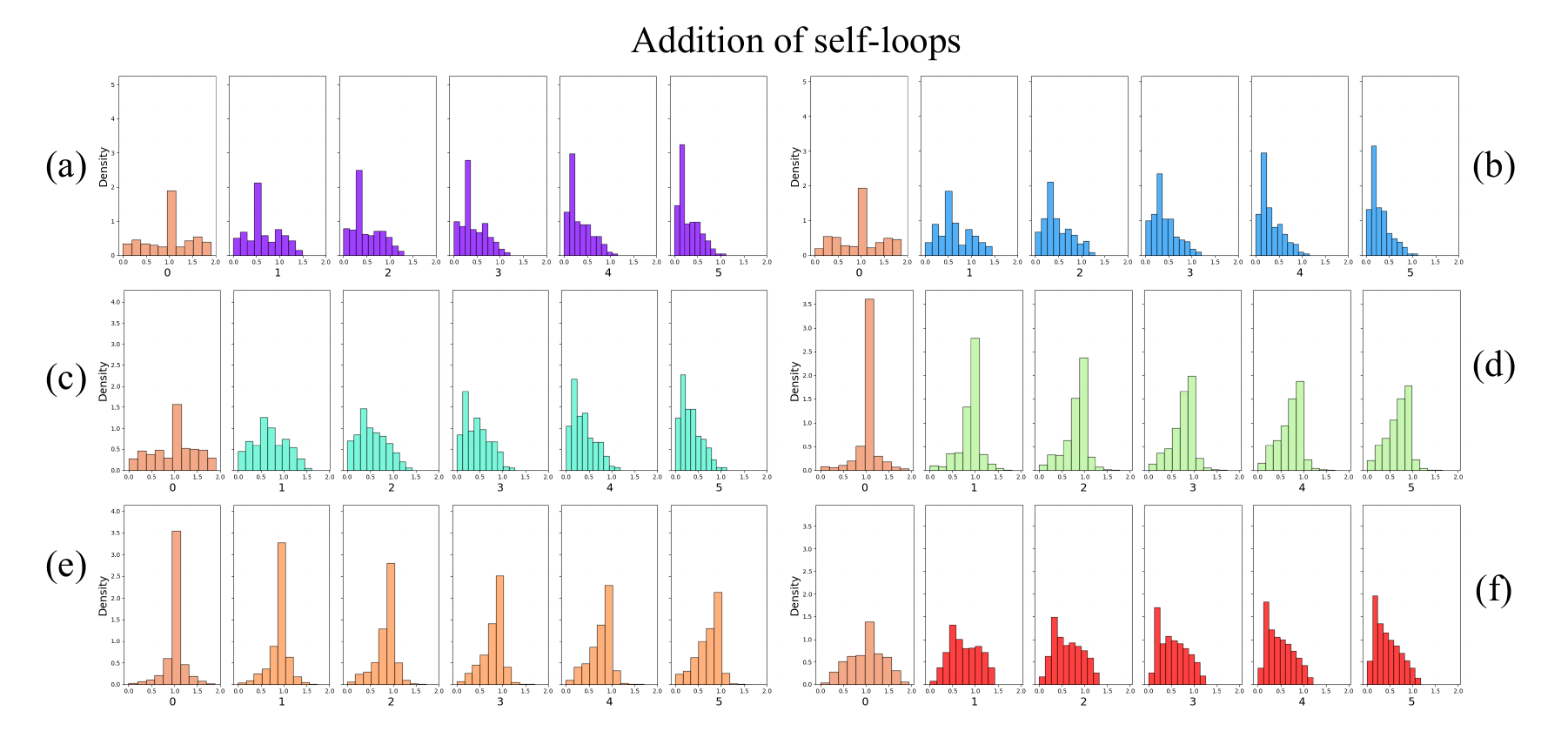}
    \caption{The distribution of eigenvalues of normalized graph Laplacian for the datasets (a) Cornell, (b) Texas, (c) Wisconsin, (d) Chameleon, (e) Squirrel, and (f) Actor is demonstrated after adding self-loops. The initial distribution is shown on the left side of each diagram. The number of self-loops varies from $1$ to $5$ and corresponding changes are recorded. }
    \label{fig:self_loops}
\end{figure*}

\vspace{5pt}
\noindent
\textbf{Addition of Parallel edges} The addition of parallel edges improves the performance of Cornell, Texas, Wisconsin, Chameleon, and Squirrel. The theoretical analysis illustrates that eigenvalues increase with the increasing number of parallel edges. Consequently, the number of higher frequencies will increase which seems to impede the performance of GCN. Refer to Figure \ref{fig:parallel-edges}((a), (b), (c), (d), (e)) to visualize the change in the distribution of eigenvalues of $\Tilde{L}$ with the addition of the parallel edges. One common point should be mentioned that despite the increasing number of higher frequencies, many lower frequencies are still retained in the spectrum. This phenomenon performs the balance of the parity of lower and higher frequencies in the graph spectrum, eventually improving the performance of GCN.   

On the contrary, GCN exhibited deteriorating performances on the Actor dataset with the increasing number of parallel edges. Refer to Figure \ref{fig:parallel-edges}(f) for the histogram asserting that the number of lower frequencies almost diminished. As a consequence, GCN witnessed a degrading performance. 

\vspace{5pt}
\noindent
\textbf{Key Takeaway} The crux of the experiments lies in the identification of trends (either increasing or decreasing) in the performance of low-pass filters with the addition of self-loops or parallel edges. Every input graph can be uniquely mapped to one of the four pre-defined categories depending on the performance trends. We marked the respective categories of all $17$ heterophilic graphs involved in the experimentation. The assigned category characterizes the specific eigenvalue distribution of the normalized graph Laplacian for the corresponding graph. The distribution of the eigenvalues offers significant insights into the structural patterns of the graphs like connected components, community structures, expansion properties, clustering, robustness, etc. The unraveling of such properties is often accomplished by pursuing computationally expensive eigenvalue decomposition or time-consuming prevailing graph algorithms. Amid the growing size of the graph resorting to such strategies leads to a labyrinth of the methodologies. 

The size of the graphs like Cornell, Texas, or Chameleon is moderate and we performed eigenvalue decomposition to study the eigenvalue distribution of $\Tilde{L}$. The initial distributions of the six graphs are presented in Figure \ref{fig:init_eigen}. One can easily contemplate the high-level structural properties by observing the initial patterns. For instance, Cornell, Texas, and Wisconsin have a distribution mostly symmetrical around $1$ in the spectrum, asserting the balanced parity of lower and higher frequencies. The observation underscores the presence of weakly connected components or isolated nodes. This is also validated from Table \ref{tab:isolated} as Cornell contains $47.5\%$ isolated nodes. A similar argument applies to both Texas and Wisconsin. Conversely, Chameleon and Squirrel both have higher frequencies which signifies that graphs are dense, have strongly connected components, and contain no isolated nodes as confirmed from Table \ref{tab:isolated}. In a different vein, Actor has an almost balanced parity of lower and higher frequencies characterizing the lower number of isolated nodes with a moderate average degree and density compared to the other five previously mentioned graphs.

\begin{figure*}[!ht]
    \centering
    \includegraphics[width=\textwidth]{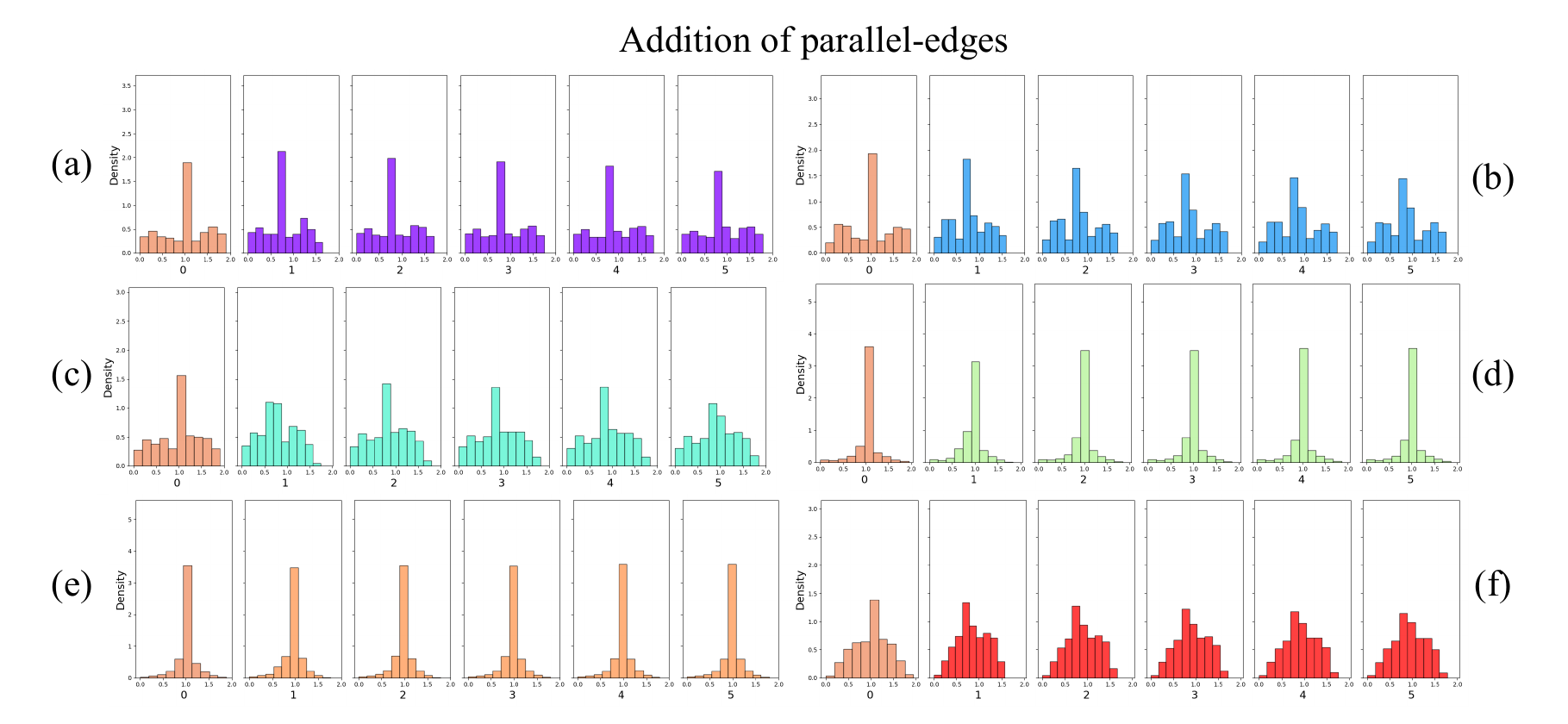}
    \caption{ The distribution of eigenvalues of normalized graph Laplacian for the datasets (a) Cornell, (b) Texas, (c) Wisconsin, (d) Chameleon, (e) Squirrel, and (f) Actor are demonstrated after adding parallel edges. The initial distribution is shown on the left side of each diagram. The number of parallel edges varies from $1$ to $5$ and corresponding changes are recorded.}
    \label{fig:parallel-edges}
\end{figure*}

\subsection{Inferring Characteristics of Spectrum for Large-scale Graphs}
Evaluating the eigenvalue distribution of $\Tilde{L}$ for a large-scale graph is critically challenging due to the potential computational overhead. Exploration of eigenvalue distribution is possible by separately observing the performance trends of the low-pass filters, with the addition of self-loops and parallel edges. This approach drastically reduces the computational budget and offers deeper insights into the intricate structural patterns in the given networks. 

\noindent
\textbf{arxiv-year, snap-patents, and genius}
As per empirical evidence, GCN witnessed decreasing performance trends on arxiv-year, snap-patents, and genius in both cases of self-loops and parallel edges. The results emphasized that three networks contain a significantly lower number of non-zero frequencies with a substantial number of zero eigenvalues, asserting that the networks contain a large number of isolated nodes. The edge density and average degree of the graphs are also lower for the three graphs which can also be verified by referring to Table \ref{tab:isolated}. Therefore, we can predict the weak connectivity and the presence of isolated nodes in those graphs by only observing the performance trends with the addition of self-loops and parallel edges. 

\noindent
\textbf{Penn94}
Penn94 was categorized in "C" resembling the category of Chameleon and Squirrel (Refer Figure \ref{fig:init_eigen}). The initial frequency distribution of $\Tilde{L}$ suggests a greater number of higher frequencies resulting in the densely connected network compared to the other five graphs (Refer Table \ref{tab:isolated}). Also, Penn94 contains no isolated nodes sharing properties similar to those of Chameleon and Squirrel. 

\noindent
\textbf{twitch-gamers} The performance trends made twitch-gamers posited in category "B" which is identical to Actor. Thus, the eigenvalue distribution of twitch-gamers also resembles to Actor, having balanced centering to eigenvalue $1$ and almost equal parity of lower and higher frequencies. Like Actor, twitch-gamers also have moderately dense and average node degrees with a lesser number of isolated nodes compared to the other five heterophilic graphs in this group.   

\noindent
\textbf{pokec}
The category of pokec marked as $A$ shares identical characteristics with Cornell, Texas, and Wisconsin. The performance trend indicated the presence of the eigenvalues centering around eigenvalue $1$ having balanced parity of lower and higher frequencies. Identically, pokec may contain isolated nodes and also weakly connected components. 

\begin{table*}[!ht]
    \centering
    \scriptsize
     \caption{Analysis of the performance of GCN on Chameleon dataset is presented with the variation of number of self-loops and parallel edges simultaneously across the network. }
    \label{tab:both_vary}
    \resizebox{\columnwidth}{!}{
    \begin{tabular}{l|c|c|ccccc}
    \toprule
    Dataset & \# Parallel edges & & 1 & 2 & 3 & 4 & 5 \\
    \midrule
     \multirow{5}{*}{Chameleon} &  \multirow{5}{*}{\rotatebox{90}{\# Self-loops}} & 1 & $65.00\pm1.32$ & $68.61\pm1.52$ &	$68.94\pm1.53$ & $69.14\pm1.35$ & \cellcolor[HTML]{9BF398} $69.75\pm0.98$  \\
     & & 2 & $59.84\pm1.97$ & $65.06\pm2.07$ & $66.71\pm1.35$ & $68.50\pm1.70$ & $68.64\pm1.77$ \\
     & & 3 & $58.55\pm2.15$ & $68.61\pm2.26$ & $65.00\pm1.95$ & $66.88\pm2.01$ & $67.41\pm1.39$ \\
     & & 4 & $58.04\pm2.78$ & $59.53\pm1.63$ & $62.82\pm2.03$ & $64.42\pm1.81$ & $67.03\pm1.90$ \\
     & & 5 & \cellcolor[HTML]{EE9792} $57.93\pm2.31$ & $59.16\pm1.54$ & $60.24\pm2.55$ & $63.70\pm2.00$ & $65.06\pm1.49$ \\
     \bottomrule
    \end{tabular}}
\end{table*}

\noindent
\textbf{Roman-empire, Amazon-ratings, Minesweeper, and Questions}
The four graphs have demonstrated declining performance trends while self-loops are added and performance is improved with the addition of parallel edges. The graphs thereby belonged to the category of "C" similar to Chameleon and Squirrel. This predicament has underscored that graphs contain higher frequencies than lower frequencies. The graph spectra are indicative of the higher sparsity and lower average degree of the graphs, compared to those of the Tolokers. 

\noindent
\textbf{Tolokers} 
Tolokers is categorized as "D" having similarity with arxiv-year, snap-patents, and genius. The performance trends signal the presence of a higher number of zero eigenvalues in the spectrum, containing a good number of isolated nodes (Refer to Table \ref{tab:isolated}).

\subsection{Runtime Comparison with Traditional Eigendecomposition Algorithms}
We conducted an extensive study on $17$ heterophilic benchmarks to compare the runtime of our proposed strategy with that of traditional eigendecomposition algorithms. We applied an inbuilt function \texttt{np.linalg.eig} implemented in the Numpy package to execute eigendecomposition. All experiments are carried out on a single $24$ NVIDIA GeForce RTX 3090 GPU and we obtained the results as presented in Table \ref{tab:runtime_comp}. For each graph, we leverage $10$ GNN training sessions for $5$ self-loops and $5$ parallel edges. For both cases, we utilized \texttt{time} function from Python. The analyses reveal that for small-scale graphs, the traditional algorithm outperforms our approach but for medium-sized graphs, our approach exhibited faster runtime in comparison to the inbuilt algorithm. Additionally, for large-scale graphs, our system showed Out-of-Memory (OOM) or process killed and was unable to estimate eigendecomposition. The experimental results underscore the significance of our approach to gain insights into the spectra without performing the time and memory-intensive existing eigendecomposition techniques.

\begin{table}[!ht]
\centering
\scriptsize
\caption{ The comparative study between traditional eigendecomposition algorithms and our method applied to $17$ heterophilic benchmarks. OOM denotes out-of-memory. }
\label{tab:runtime_comp}
\resizebox{\textwidth}{!}{%
\begin{tabular}{lcccccc}
\toprule
Runtime (sec.) & Chameleon & Squirrel & Film & Texas & Cornell & Wisconsin \\
\midrule
\texttt{np.linalg.eig} & $2.5334$ & $22.1533$ & $86.9033$ & $0.0442$ & $0.0226$ & $0.0599$ \\
Ours & $73.4782$ & $162.0774$ & $371.0116$ & $52.9237$ & $52.5499$ & $52.9963$ \\
\midrule
Runtime (sec.) & arxiv-year & snap-patents & Penn94 & pokec & twitch-gamers & genius \\
\midrule
\texttt{np.linalg.eig} & OOM & OOM & OOM & OOM & OOM & OOM \\
Ours  & $365.9806$ & $699.1057$ & $2866.6932$ & $1467.4539$ & $1097.2305$ & $576.4411$ \\
\midrule
Runtime (sec.) & Roman-empire & Amazon-ratings & Minesweper & Tolokers & Questions & \\
\midrule
\texttt{np.linalg.eig} & $3240.4251$ & $3032.0099$ & $40.7877$ & $94.4341$ & process killed & \\
Ours & $95.5101$ & $135.3722$ & $79.7196$ & $360.0264$ & $247.7609$ & \\
\bottomrule
\end{tabular}%
}
\end{table}

\subsection{Visualization of Spectrum for Heterophilic Graphs}
We conducted a comparative study on the lower and higher frequencies of the six heterophilic graphs Chameleon, Squirrel, Texas, Cornell, Wisconsin, and Actor. The eigenvalue decomposition is performed on $\Tilde{L}$ of individual datasets. The corresponding eigenvalues are divided into two sets $\lambda_{<1}$ and $\lambda_{\ge1}$ as mentioned earlier. We estimated the percentage of the low frequencies and high frequencies of each dataset. Refer to Figure \ref{fig:spectrum} for the detailed illustration. 
\begin{figure}[!ht]
    \centering
    \includegraphics[width=0.45\textwidth]{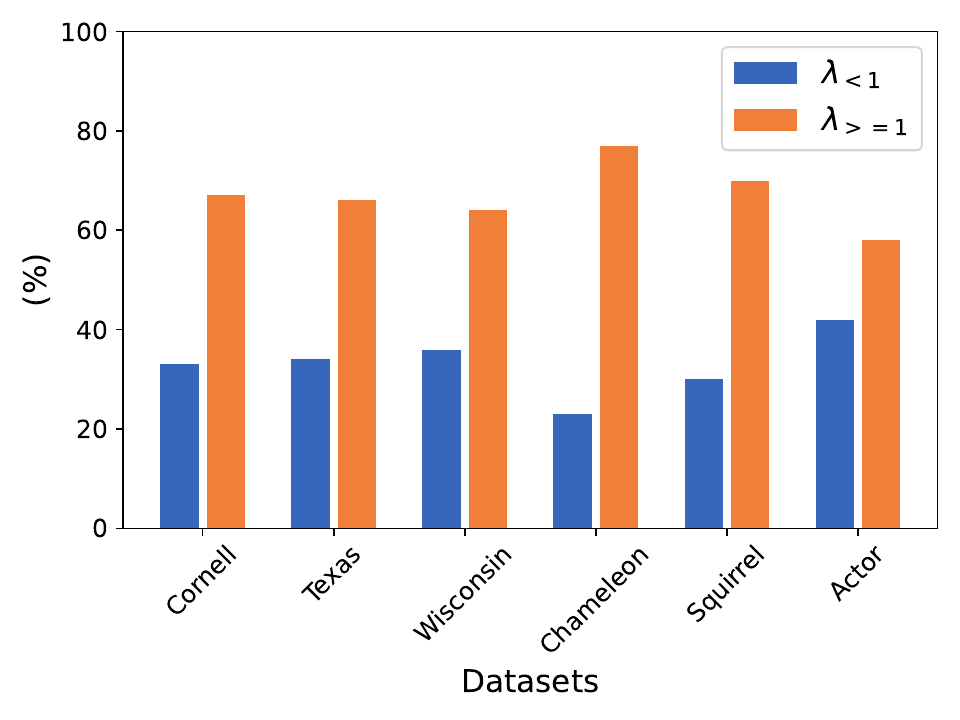}
    \caption{A comparative study on the number of low frequencies and high frequencies of the graph spectrum for six standard heterophilic graphs. }
    \label{fig:spectrum}
\end{figure}
The figure delineates that the number of lower and higher frequencies for Cornell, Texas, and Wisconsin are identical. In the experiments, their category is also similar which is "A". Concurrently, the Chameleon and Squirrel have similar patterns of the parity of eigenvalues and they also belong to a similar category "C". Furthermore, the pattern for the Actor is completely different from the rest of the others, and ends up marked as "B". The study established the unique interconnectedness of the parity of eigenvalues, frequency distribution of spectrum, and performance trends of low-pass filters.


\subsection{Variation of both Self-loops and Parallel edges}
A study is conducted to observe the effect on the performance of GCN with the variation of both self-loops and parallel edges. We consider the Chameleon as our candidate graph to serve the purpose. The number of self-loops and parallel edges both varied from $1$ to $5$, taking into account $25$ combinations. GCN is applied to the modified Chameleon graph for each combination. The mean test accuracy with standard deviation estimated over $10$ splits is reported against each combination. Refer to Table \ref{tab:both_vary} to get a detailed illustration of the results. The uptick trend is noted for the increasing number of parallel edges in the network. While increasing the number of self-loops, a degradation in the model performance is observed. For a fixed number of self-loops, the test accuracy increases with the addition of parallel edges, irrespective of the beginning of the initial performance. Every combination of $(\alpha_i, \gamma_j)$ produces a filter with the adjusted lower and higher number of frequencies. Notably, the lowest performance is achieved when the number of self-loops is highest ($\alpha=5$) and the number of parallel edges is lowest ($\gamma=1$). The best performance is obtained with just the reverse settings like the lowest number of self-loops ($\alpha=1$) and the highest number of parallel edges ($\gamma=5$).

\noindent
\textbf{Intuitive Explanation.} Adding self-loops increases the number of lower frequencies, shifting the graph spectrum towards the eigenvalue $0$. Conversely, the addition of parallel edges increases the number of higher frequencies, shifting the spectrum toward the value $2$. The addition of a maximum $\mathcal{P}$ number of self-loops and a maximum $\mathcal{Q}$ number of parallel edges shifts the graph spectrum to the left and right directions accordingly. Precisely, the addition of $\alpha \le \mathcal{P}$-number of self-loops and $\gamma \le \mathcal{Q}$-number of parallel edges will transform the graph spectrum intermediate between the two aforementioned extreme scenarios. The updated spectrum oscillation between the two extreme cases and the performance of GCN is also reflected in our quantitative analysis.

\subsection{Effect on Performance with Very Large $\alpha$ and $\gamma$ }
We performed a study on the Chameleon, Squirrel, Roman-empire, and genius by increasing $\alpha$ and $\gamma$ to higher values to monitor the performance of GCN. The numbers of both $\alpha$ and $\gamma$ are varied from $1$ to $20$, and corresponding test accuracy with standard deviations are demonstrated in Figure \ref{fig:higher_edges}. The study indicates the maintenance of the performance trends of the different datasets. The trend mostly depends on the input graph. For example, GCN on Chameleon showed a decrease (increase) in the number of self-loops (parallel edges). Conversely, GCN on genius exhibited a downtrend in performance by adding both self-loops and parallel edges. It is noteworthy that performance stabilized with the higher value of $\alpha$ or $\gamma$. Empirical observation points out that the change in eigenvalues will become comparably negligible when the value of $\alpha$ or $\gamma$ exceeds a certain limit. This phenomenon can be attributed to the saturating performance trends observed in the experiment.  

\begin{figure*}
    \centering
    \includegraphics[width=0.90\textwidth]{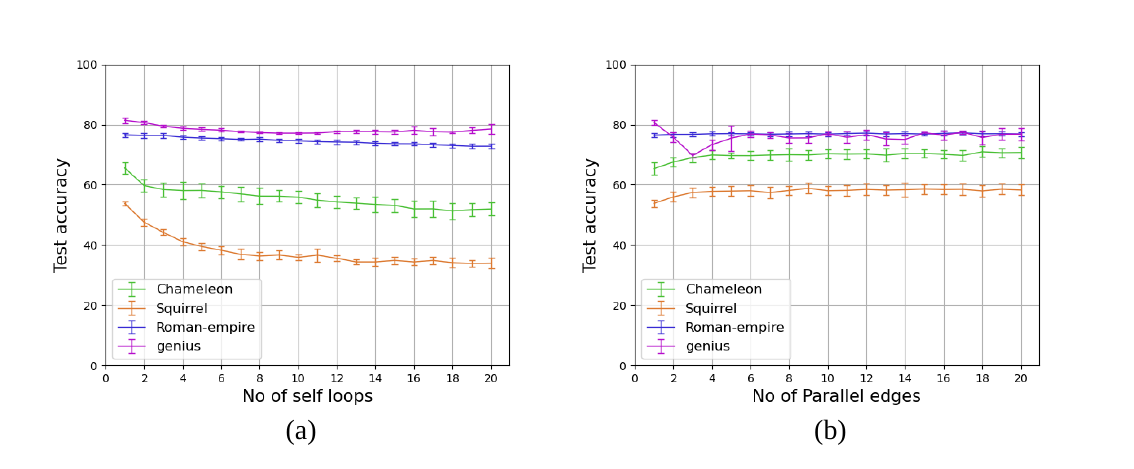}
    \caption{The effect on the performance of GCN on Chameleon, Squirrel, Roman-empire, and genius when (a) self-loops and (b) parallel edges are added a higher number of times is presented. }
    \label{fig:higher_edges}
\end{figure*}

\subsection{Utility for Practitioners}
In this work, we primarily focus on determining properties of the graph spectrum without resorting to costly eigenvalue decomposition. We map the input graph into one of the four categories based on the patterns observed in the performance of GCN. Beyond the theoretical insights and empirical evaluations, our work possesses some benefits for practitioners. We enjoy the following advantages without performing direct eigendecomposition.   
\begin{itemize}
    \item Our strategy reveals the shape of the graph spectrum characterizing structural properties like the presence of connected components, communities, etc. For example, regular graphs have a symmetric graph spectrum. Additionally, the spectral density can identify graph classes like scale-free, random, and small-world. 
    \item Computing similarity measures between two large graphs is cumbersome. Our method offers insights into the spectrum distributions, whose comparisons enable quick similarity assessment between the large networks. 
    \item We can detect anomalies of the evolving or dynamic networks by monitoring shifts in the respective distribution shifts.  
\end{itemize}

\section{Conclusion \& Future Works}
We conduct detailed studies regarding the performance of low-pass filters like GCN on the heterophilic graphs. The studies revealed that GCN showed monotone performance trends when the input graph is equipped with an increasing number of self-loops or parallel edges. We categorize the input graphs into four distinct categories based on these performance trends. We further observed that the eigenvalues of the normalized graph Laplacian decrease and increase when self-loops or parallel edges are added. Consequently, each category entails a significant amount of information pertaining to the characteristics of the graph spectrum. Therefore, the performance trends depend solely on the distribution of the eigenvalues in the entire spectrum. The graph spectrum patterns reveal the graph's intrinsic characteristics such as connected components, community structure, etc. Our work manifests a cost-effective pathway for estimating and understanding intrinsic properties and intricate patterns in the graph data, refraining from performing expensive computations. The design of effective application of GNNs to replace the prevailing costly algorithmic computations can be a potential avenue for future research directions.

\bibliography{main}
\bibliographystyle{IEEEtran}

\appendix
\section{Appendix}

\subsection{Proofs}

This section will offer the detailed proofs and necessary derivations for Lemma \ref{max_self_loop}, Lemma \ref{max_parallel_edge}, Lemma \ref{lem_sl}, Lemma \ref{lem_pe}, Theorem \ref{k_regular_self_loop}, Theorem \ref{k_regular_parallel_edge}, Theorem \ref{approx_sl_thm}, Theorem \ref{approx_pe_thm}, and Corollary \ref{cor1_sl_for_pe}. Before delving into the detailed proofs and derivations, the following Lemma will assist in proving the following theorems. The following proposition is adopted from \cite{sgc}.

\begin{proposition}
\label{lem1}
    Let us assume $\beta_1 \le \beta_2 \le \cdots \le \beta_n$ are the eigenvalues of $D^{-\frac{1}{2}} A D^{-\frac{1}{2}}$ and $\delta_1 \le \delta_2 \le \cdots \le \delta_n$ are the eigenvalues of $\Tilde{D}^{-\frac{1}{2}} A \Tilde{D}^{-\frac{1}{2}}$ where $ \Tilde{D}_{\alpha} = D + \alpha \text{I}$, then we have the following inequalities
    \begin{equation}
        \begin{split}
            \delta_1 \ge \frac{\text{max}_{i} d_i}{\alpha + \text{max}_{i} d_i} \beta_1, \hspace{2cm} \delta_n \le \frac{\text{min}_{i} d_i}{\alpha + \text{min}_{i} d_i}.
        \end{split}
    \end{equation}
\end{proposition}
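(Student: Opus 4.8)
The plan is to recognize the matrix in question as a diagonal congruence of $B := D^{-\frac{1}{2}} A D^{-\frac{1}{2}}$ and then to track how such a congruence moves each eigenvalue. Setting $S := D^{\frac{1}{2}}\Tilde{D}_{\alpha}^{-\frac{1}{2}} = \mathrm{diag}\!\big(\sqrt{d_i/(d_i+\alpha)}\,\big)$, one has $\Tilde{D}_{\alpha}^{-\frac{1}{2}} A \Tilde{D}_{\alpha}^{-\frac{1}{2}} = S B S$ with $S$ a positive diagonal matrix. The eigenvalues of $S^2 = D\Tilde{D}_{\alpha}^{-1} = \mathrm{diag}\!\big(d_i/(d_i+\alpha)\big)$ are the numbers $d_i/(d_i+\alpha)$, and since $t\mapsto t/(t+\alpha)$ is increasing for $t>0$ and $\alpha\ge0$, they all lie in $\big[\tfrac{\min_i d_i}{\alpha+\min_i d_i},\ \tfrac{\max_i d_i}{\alpha+\max_i d_i}\big]$.

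I would then invoke Ostrowski's theorem (or prove the needed special case by a Rayleigh-quotient comparison): for the symmetric $B$ and the nonsingular diagonal $S$, the $k$-th eigenvalue satisfies $\delta_k = \theta_k\,\beta_k$ for some $\theta_k\in[\lambda_{\min}(S^2),\lambda_{\max}(S^2)]$. The estimate for $\delta_1$ then comes from $k=1$: since $\mathcal{G}$ has at least one edge, $\beta_1<0$, so multiplying by $\theta_1\le \tfrac{\max_i d_i}{\alpha+\max_i d_i}$ can only move the product upward, which gives $\delta_1 \ge \tfrac{\max_i d_i}{\alpha+\max_i d_i}\,\beta_1$. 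The estimate for the largest eigenvalue is the symmetric computation with $k=n$, using connectedness of $\mathcal{G}$ to fix $\beta_n=1$ and then selecting the relevant endpoint of the $\theta$-interval.

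If one prefers to avoid citing Ostrowski, the same conclusion follows elementarily: write $\delta_1 = \min_{x\ne0}\frac{x^\top S B S x}{x^\top x}$, substitute $y=Sx$ to get $\delta_1 = \min_{y\ne0}\frac{y^\top B y}{y^\top S^{-2} y}$, and split this quotient as $\frac{y^\top B y}{y^\top y}\cdot\frac{y^\top y}{y^\top S^{-2} y}$, where the first factor lies in $[\beta_1,\beta_n]$ and the second in $\big[\tfrac{\min_i d_i}{\alpha+\min_i d_i},\tfrac{\max_i d_i}{\alpha+\max_i d_i}\big]$; multiplying the two ranges with due regard for signs delivers the bound, and the statement for $\delta_n$ is identical with $\max$ in the role of $\min$.

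The main obstacle is exactly this sign bookkeeping. The factor $\frac{y^\top B y}{y^\top y}$ takes both signs, so the direction in which multiplication by the positive scaling factor moves the quotient flips depending on whether we are chasing the smallest or the largest eigenvalue; one must pick the correct endpoint of $\big[\tfrac{\min_i d_i}{\alpha+\min_i d_i},\tfrac{\max_i d_i}{\alpha+\max_i d_i}\big]$ in each case and also verify the sign hypotheses ($\beta_1<0$ from the existence of an edge, $\beta_n=1$ from connectedness / absence of isolated nodes) that license those choices. A secondary subtlety is that the $y$ minimizing $\frac{y^\top B y}{y^\top S^{-2} y}$ need not minimize $\frac{y^\top B y}{y^\top y}$, so the factorization has to be wielded as a bound valid for every $y$ rather than evaluated at a single optimal vector.
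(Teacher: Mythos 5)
Your ``elementary'' route is, in substance, the paper's own proof of the first inequality: the paper likewise sets $y=D^{\frac{1}{2}}\Tilde{D}_{\alpha}^{-\frac{1}{2}}x$ with $\norm{x}=1$, rewrites $\delta_1=\min_{\norm{x}=1} y^{\top}D^{-\frac{1}{2}}AD^{-\frac{1}{2}}y$ as $\min\big(\tfrac{y^{\top}By}{\norm{y}^2}\cdot\norm{y}^2\big)$, notes $\norm{y}^2\in\big[\tfrac{\min_i d_i}{\alpha+\min_i d_i},\tfrac{\max_i d_i}{\alpha+\max_i d_i}\big]$, and uses $\beta_1<0$ (from $\text{Tr}(D^{-\frac{1}{2}}AD^{-\frac{1}{2}})=0$) to justify $\min(fg)\ge(\min f)(\max g)$. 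Your sign bookkeeping for $\delta_1$ is exactly right, and your observation that the factorization must be used as a pointwise bound rather than at a single optimizer is precisely the care the paper's inline justification glosses over. The Ostrowski packaging ($\delta_k=\theta_k\beta_k$ with $\theta_k$ in the spectral range of $S^2$) is a clean, citable form of the same diagonal-congruence argument; either presentation is acceptable for the $\delta_1$ bound.

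The genuine gap is in the second inequality, and it cannot be closed by ``selecting the relevant endpoint.'' Since $\beta_n=1>0$, every version of this argument (Ostrowski with $k=n$, or the quotient splitting) bounds $\delta_n$ from above by the \emph{upper} endpoint of the scaling range, giving $\delta_n\le\frac{\max_i d_i}{\alpha+\max_i d_i}$; the lower endpoint $\frac{\min_i d_i}{\alpha+\min_i d_i}$ only furnishes a \emph{lower} bound on $\theta_n$, which is useless for an upper bound on $\delta_n=\theta_n\beta_n$. Indeed the inequality as stated fails in general: for the path on three vertices with $\alpha=1$ the matrix $\Tilde{D}_{\alpha}^{-\frac{1}{2}}A\Tilde{D}_{\alpha}^{-\frac{1}{2}}$ has largest eigenvalue $1/\sqrt{3}\approx 0.577$, whereas $\frac{\min_i d_i}{\alpha+\min_i d_i}=\frac{1}{2}$. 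The paper's proof dismisses this half with a one-line ``similarly'' and never uses the $\delta_n$ bound downstream (only the $\delta_1$ bound enters Lemma~\ref{max_self_loop} and Lemma~\ref{max_parallel_edge}), so the damage is contained; but your write-up should either prove and state the $\max$-version of the upper bound or flag that the $\min$-version does not follow from this method.
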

\begin{proof}
     Recall that $L_{\text{sym}} = \text{I} - D^{-\frac{1}{2}} A D^{-\frac{1}{2}}$ and a well-known fact is that $0$ is an eigenvalue of $L_{\text{sym}}$. Therefore, we have $\beta_n = 1$. As $A$ is free of self-loops then $\text{Tr}(D^{-\frac{1}{2}} A D^{-\frac{1}{2}}) = 0 = \sum_{i} \beta_{i}$ which implies $\beta_1 < 0$.
    
\noindent
    Choose $x$ such that $||x|| = 1$ and consider $y = D^{\frac{1}{2}} \Tilde{D}_{\alpha}^{-\frac{1}{2}} x$. Now, $||y||^2 = \sum_{i} \frac{d_i}{d_i + \alpha} x^{2}_i$. Also, we have $\frac{\text{min}_{i} d_i}{\alpha + \text{min}_{i} d_i} \le ||y||^2 \le \frac{\text{max}_{i} d_i}{\alpha + \text{max}_{i} d_i}$. 

\noindent
    Applying the Rayleigh quotient, we have the following bound for the smallest eigenvalue $\alpha_1$,
    \begin{equation}
    \label{eq:prop_1}
        \begin{split}
             \delta_1 &= \min_{||x||=1} (x^{\top} \Tilde{D}_{\alpha}^{-\frac{1}{2}} A \Tilde{D}_{\alpha}^{-\frac{1}{2}} x)  \\
            &= \min_{||x||=1} (y^{\top} D^{-\frac{1}{2}} A D^{-\frac{1}{2}} y)
             \hspace{1cm} (\text{by variable substitution}) \\
            &= \min_{||x||=1} (\frac{y^{\top} D^{-\frac{1}{2}} A D^{-\frac{1}{2}} y}{||y||^2} ||y||^2) \\
            & \ge \min_{||x||=1} (\frac{y^{\top} D^{-\frac{1}{2}} A D^{-\frac{1}{2}} y}{||y||^2}) \max_{||x||=1} (||y||^2) \\
            & (\because \min(f(z)g(z)) \ge \min(f(z)) \max(g(z))\,\, \text{if} \\ & \, \min(f(z)) < 0, \forall g(z) > 0 \\ 
            & \text{and}\,\, \min_{||x||=1} (\frac{y^{\top} D^{-\frac{1}{2}} A D^{-\frac{1}{2}} y}{||y||^2}) = \beta_1 < 0 ) \\
            &= \beta_1 \max_{||x||=1} ||y||^2 \\
            & \ge \frac{\text{max}_{i} d_i}{\alpha + \text{max}_{i} d_i} \beta_1
        \end{split}
    \end{equation}
Similarly, the upper bound for $\delta_n$ can be proved as 
$\delta_n \le \frac{\text{min}_{i} d_i}{\alpha + \text{min}_{i} d_i}$. A similar problem can be solved for the parallel edge addition with $\Tilde{D}_{\gamma} = (1+\gamma)D + I$. and $\delta_1 = \min_{||x||=1} (x^{\top} \Tilde{D}_{\gamma}^{-\frac{1}{2}} A \Tilde{D}_{\gamma}^{-\frac{1}{2}} x)$. This problem will yield the bound as $\delta_1 \ge \frac{\text{max} d_i}{1 + (1+\gamma) \text{max} d_i} \beta_1$. The bound can be derived similarly as depicted in Eq. \ref{eq:prop_1}. 
\end{proof}

\noindent
\textbf{Lemma \ref{max_self_loop}}
Consider a graph $G$ with $A$ and $D$ as the adjacency and degree matrix. Now $\alpha$-times self-loops are added in $G$ with $\alpha \in \mathbb{Z}^{+}$. Assume $\lambda_{\alpha}^{\text{max}}$ is the maximum eigenvalue of symmetrically normalized graph Laplacian $\Tilde{L}_{\alpha}$ of the updated graph. If $\beta_1$ is the smallest eigenvalue of $D^{-\frac{1}{2}} A D^{-\frac{1}{2}}$ and $\max_{i} d_i$ is the maximum degree of $G$, then $\lambda_{\alpha}^{\text{max}} \le \frac{\max_{i} d_i (1 - \beta_1)}{\alpha + \max_{i} d_i}$.     
\begin{proof}
     After applying $\alpha$-times self-loops the symmetrically normalized Laplacian is presented as:
\begin{equation}
    \begin{split}
        \Tilde{L}_{\alpha} & = \text{I} - \Tilde{D}^{-\frac{1}{2}}_{\alpha} \Tilde{A}_{\alpha} \Tilde{D}^{-\frac{1}{2}}_{\alpha} \\
        & = \text{I} - \Tilde{D}^{-\frac{1}{2}}_{\alpha} (A + \alpha \text{I}) \Tilde{D}^{-\frac{1}{2}}_{\alpha} \\ 
        & = \text{I} - \Tilde{D}^{-\frac{1}{2}}_{\alpha} A \Tilde{D}^{-\frac{1}{2}}_{\alpha} - \alpha \Tilde{D}^{-1}_{\alpha}
    \end{split}
\end{equation}
Let $\lambda^{\text{max}}_{\alpha}$ is the maximum eigenvalue of $\Tilde{L}_{\alpha}$ and applying Rayleigh quotient the following can be obtained
\begin{equation}
    \begin{split}
        \lambda^{\text{max}}_{\alpha} &= \max_{||x||=1} x^{\top} \Tilde{L}_{\alpha} x \\ 
        &= \max_{||x||=1} x^{\top}(\text{I} - \Tilde{D}^{-\frac{1}{2}}_{\alpha} A \Tilde{D}^{-\frac{1}{2}}_{\alpha} - \alpha \Tilde{D}^{-1}_{\alpha}) x \\ 
        & \le (1 - \min_{||x||=1} x^{\top} \Tilde{D}^{-\frac{1}{2}}_{\alpha} A \Tilde{D}^{-\frac{1}{2}}_{\alpha} x - \min_{||x||=1} \alpha x^{\top} \Tilde{D}^{-1}_{\alpha} x ) \\ 
        &= 1 - \delta_1 - \frac{\alpha}{\alpha + \max_{i} d_i} \hspace{0.5cm} (\text{from Proposition } \ref{lem1}) \\
        & \le 1 - \frac{\max_{i} d_i}{\alpha + \max_{i} d_i} \beta_1 - \frac{\alpha}{\alpha + \max_{i} d_i} \\
        & = \frac{\max_{i} d_i (1 - \beta_1)}{\alpha + \max_{i} d_i}
    \end{split}
\end{equation}
When $\alpha$ increases the upper bound of $\lambda^{\text{max}}_{\alpha}$ decreases which indicates the possible shrinking of the maximum eigenvalue of the graph spectrum.


\end{proof}

\noindent
\textbf{Lemma \ref{max_parallel_edge}}
Consider a graph $G$ with $A$ and $D$ as the adjacency and degree matrix. Now $\gamma$-times parallel edges are added in $G$ with $\gamma \in \mathbb{Z}^{+}$. Assume $\lambda_{\gamma}^{\text{max}}$ is the maximum eigenvalue of symmetrically normalized graph Laplacian $\Tilde{L}_{\gamma}$ of the updated graph. If $\beta_1$ is the smallest eigenvalue of $D^{-\frac{1}{2}} A D^{-\frac{1}{2}}$ and $\max_{i} d_i$ is the maximum degree of $G$, then $\lambda_{\gamma}^{\text{max}} \le \frac{(1+\gamma) \max_{i} d_i (1 - \beta_1)}{1 + (1+\gamma) \max_{i} d_i}$. 
\begin{proof}
    After applying $\gamma$-times parallel edges in the graph, the symmetrically normalized graph Laplacian will be 
    \begin{equation}
    \begin{split}   
        \Tilde{L}_{\gamma} & = \text{I} - \Tilde{D}^{-\frac{1}{2}}_{\gamma} \Tilde{A}_{\gamma} \Tilde{D}^{-\frac{1}{2}}_{\gamma} \\
        & = \text{I} - \Tilde{D}^{-\frac{1}{2}}_{\gamma} ((\gamma+1) A + \text{I}) \Tilde{D}^{-\frac{1}{2}}_{\gamma} \\ 
        & = \text{I} -  (\gamma+1) \Tilde{D}^{-\frac{1}{2}}_{\gamma} A \Tilde{D}^{-\frac{1}{2}}_{\gamma} - \Tilde{D}^{-1}_{\gamma}
    \end{split}
\end{equation}
Let $\lambda^{\text{max}}_{\gamma}$ is the maximum eigenvalue of $\Tilde{L}_{\gamma}$ and applying Rayleigh quotient the following can be obtained
\begin{equation}
    \begin{split}
        \lambda^{\text{max}}_{\gamma} &= \max_{||x||=1} x^{\top} \Tilde{L}_{\gamma} x \\ 
        &= \max_{||x||=1} x^{\top}(\text{I} - (\gamma+1) \Tilde{D}^{-\frac{1}{2}}_{\gamma} A \Tilde{D}^{-\frac{1}{2}}_{\gamma} - \Tilde{D}^{-1}_{\gamma}) x \\ 
        & \le (1 - (\gamma+1) \min_{||x||=1} x^{\top} \Tilde{D}^{-\frac{1}{2}}_{\gamma} A \Tilde{D}^{-\frac{1}{2}}_{\gamma} x - \min_{||x||=1} x^{\top} \Tilde{D}^{-1}_{\gamma} x ) \\ 
        &= 1 - (\gamma+1) \delta_1 - \frac{1}{1 + (1+\gamma) \max_{i} d_i} \,\, (\text{from Proposition } \ref{lem1}) \\
        & \le 1 - \frac{(\gamma+1) \max_{i} d_i}{1 + (1+\gamma) \max_{i} d_i} \beta_1 - \frac{1}{1 + (1+\gamma) \max_{i} d_i} \\
        & = \frac{(1+\gamma) \max_{i} d_i (1 - \beta_1)}{1 + (1+\gamma) \max_{i} d_i}
    \end{split}
\end{equation}
When $\gamma$ increases, the upper bound of $\lambda_{\gamma}^{\text{max}}$ increases which shows the possible expansion of the maximum eigenvalue of the graph spectrum.


\end{proof}

\noindent
\textbf{Lemma \ref{lem_sl}}
Given a $k$-regular graph $\mathcal{G}$, the eigenvalues of $\Tilde{A}_{N}^{\alpha}$ will lie in $[-1, 1]$ $\forall \alpha \ge 1$.
\begin{proof}
      Consider a $k$-regular graph $\mathcal{G}$ where each node has a degree $k$ with the normalized adjacency matrix is $\Tilde{A}_N=\Tilde{D}^{-\frac{1}{2}} \Tilde{A} \Tilde{D}^{-\frac{1}{2}}$ where $\Tilde{A}=A+I, \Tilde{D}=D+I$. If $\alpha$-times (with $\alpha \geq 1$) self-loops are added, then the updated normalized adjacency matrix is $\Tilde{A}^{\alpha}_{N}=\Tilde{D}_{\alpha}^{-\frac{1}{2}} \Tilde{A}_{\alpha} \Tilde{D}_{\alpha}^{-\frac{1}{2}}$ where $\Tilde{A}_{\alpha}=A+ \alpha I, \Tilde{D}_{\alpha}=D+ \alpha I$. As the graph is regular then $\Tilde{A}_N$ can be presented as:
    \begin{equation}
        \begin{split}
            \Tilde{A}_N &= \Tilde{D}^{-\frac{1}{2}} \Tilde{A} \Tilde{D}^{-\frac{1}{2}} \\
            &= \frac{1}{\sqrt{k+1}} (A+I) \frac{1}{\sqrt{k+1}} \\
            &= \frac{1}{k+1}(A+I)
        \end{split}
    \end{equation}
    In a similar fashion, we can express $\Tilde{A}_N^{\alpha}=\frac{1}{k+\alpha}(A+\alpha I)$. Since, both $\Tilde{A}_N$ and $\Tilde{A}_N^{\alpha}$ are the linearly scaled transformations of $A$, then both will share a similar set of eigenvectors with different scaled eigenvalues. Assume $v$ is an eigenvector of $\Tilde{A}_N$ associated with any eigenvalue $\lambda_1$. Therefore,
    \begin{equation}
        \begin{split}
            & \Tilde{A}_N v = \lambda_1 v \\
            & \frac{1}{k+1}(A+I) v = \lambda_1 v \\
            & A v = ((k+1) \lambda_1 - 1) v 
        \end{split}
    \end{equation}
    We can say that $v$ is an eigenvector of $A$ with corresponding eigenvalue $\lambda_A=((k+1) \lambda_1-1)$. Consider the eigenvector $v$ of $\Tilde{A}_{N}^{\gamma}$ with the with the eigenvalue $\lambda_2$. Then, we have the following,
    \begin{equation}
        \begin{split}
           & \Tilde{A}_{N}^{\alpha} v = \lambda_2 v \\ 
           & \frac{1}{k+\alpha}(A+\alpha I) v = \lambda_2 v \\
           & \lambda_2 = \frac{\lambda_A + \alpha}{k + \alpha}
        \end{split}
    \end{equation}
    As the range of eigenvalues of $\Tilde{A}_{N}$ is $[-1, 1]$, thus we have $\lambda_{1} \leq 1$. The following inequality can be expressed,
    \begin{equation}
        \begin{split}
            & \lambda_{1} \leq 1 \\
            & (k+1) \lambda_{1} \leq  k+1 \\
            & (k+1) \lambda_{1} - 1 \leq k+1 - 1 \\ 
            & \lambda_A \leq k
        \end{split}
    \end{equation}
    Using the inequality we will prove the next stage as 
    \begin{equation}
        \begin{split}
            & \lambda_A \leq k \\ 
            & \alpha + \lambda_A \leq \alpha + k \\
            & \frac{\alpha + \lambda_A}{k + \alpha} \leq \frac{\alpha + k}{\alpha+k} \\
            & \lambda_2 \leq 1 
        \end{split}
    \end{equation}
    In this way, we showed that any eigenvalue of $\Tilde{A}_N^{\alpha}$ is $1$. For the lower bound, we can write,
     \begin{equation}
        \begin{split}
            & \lambda_{1} \geq -1 \\
            & (k+1) \lambda_{1} \geq  -k -1 \\
            & (k+1) \lambda_{1} - 1 \geq -k -1 - 1 \\ 
            & \lambda_A \geq -k -2
        \end{split}
    \end{equation}
    Using the inequality we will prove the next stage as, 
    \begin{equation}
        \begin{split}
            & \lambda_A \geq -k -2 \\ 
            & \alpha + \lambda_A \geq \alpha -k -2 \\
            & \frac{\alpha + \lambda_A}{k + \alpha} \geq \frac{\alpha - k -2}{\alpha+k} \\
            & \lambda_2 \geq 1 - \frac{2(k+1)}{k + \alpha}  
        \end{split}
    \end{equation}
    The degree $k > 1$, then we have $\lambda_2 \geq -1$. Therefore, the addition of self-loops will not alter the range of the eigenvalues of the symmetrically normalized adjacency matrix. 
\end{proof}

\noindent
\textbf{Lemma \ref{lem_pe}}
Given a $k$-regular graph $\mathcal{G}$, the eigenvalues of $\Tilde{A}_{N}^{\gamma}$ will lie in $[-1, 1]$ $\forall \gamma \ge 1$.
\begin{proof}
      Consider a $k$-regular graph $\mathcal{G}$ where each node has a degree $k$ with the normalized adjacency matrix is $\Tilde{A}_N=\Tilde{D}^{-\frac{1}{2}} \Tilde{A} \Tilde{D}^{-\frac{1}{2}}$ where $\Tilde{A}=A+I, \Tilde{D}=D+I$. If $\gamma$-times (with $\gamma \geq 1$) parallel edges are added, then the updated normalized adjacency matrix is $\Tilde{A}^{\gamma}_{N}=\Tilde{D}_{\gamma}^{-\frac{1}{2}} \Tilde{A}_{\gamma} \Tilde{D}_{\gamma}^{-\frac{1}{2}}$ where $\Tilde{A}_{\gamma}=(1+\gamma)A+I, \Tilde{D}_{\gamma}=(1+\gamma)D+I$. As the graph is regular then $A_N$ can be presented as:
    \begin{equation}
        \begin{split}
            \Tilde{A}_N &= \Tilde{D}^{-\frac{1}{2}} \Tilde{A} \Tilde{D}^{-\frac{1}{2}} \\
            &= \frac{1}{\sqrt{k+1}} (A+I) \frac{1}{\sqrt{k+1}} \\
            &= \frac{1}{k+1}(A+I)
        \end{split}
    \end{equation}
    In a similar fashion, we can express $\Tilde{A}_N^{\gamma}=\frac{1}{1+(1+\gamma)k}((1+\gamma)A+I)$. Since, both $\Tilde{A}_N$ and $\Tilde{A}_N^{\gamma}$ are the linearly scaled transformations of $A$, then both will share a similar set of eigenvectors with different scaled eigenvalues. Assume $v$ is an eigenvector of $\Tilde{A}_N$ associated with any eigenvalue $\lambda_1$. Therefore,
    \begin{equation}
        \begin{split}
            & \Tilde{A}_N v = \lambda_1 v \\
            & \frac{1}{k+1}(A+I) v = \lambda_1 v \\
            & A v = ((k+1) \lambda_1 - 1) v 
        \end{split}
    \end{equation}
    We can say that $v$ is an eigenvector of $A$ with corresponding eigenvalue $\lambda_A=((k+1) \lambda_1-1)$. Consider the eigenvector $v$ of $\Tilde{A}_{N}^{\gamma}$ with the with the eigenvalue $\lambda_2$. Then, we have the following,
    \begin{equation}
        \begin{split}
           & \Tilde{A}_{N}^{\gamma} v = \lambda_2 v \\ 
           & \frac{1}{1+(1+\gamma)k}((1+\gamma)A+I) v = \lambda_2 v \\
           & \lambda_2 = \frac{(1+\gamma) \lambda_A + 1}{(1+\gamma)k + 1} \\
        \end{split}
    \end{equation}
    As the range of eigenvalues of $\Tilde{A}_{N}$ is $[-1, 1]$, thus we have $\lambda_{1} \leq 1$. The following inequality can be expressed,
    \begin{equation}
        \begin{split}
            & \lambda_{1} \leq 1 \\
            & (k+1) \lambda_{1} \leq  k+1 \\
            & (k+1) \lambda_{1} - 1 \leq k+1 - 1 \\ 
            & \lambda_A \leq k
        \end{split}
    \end{equation}
    Using the inequality we will prove the next stage as 
    \begin{equation}
        \begin{split}
            & \lambda_A \leq k \\ 
            & (1+\gamma) \lambda_A \leq (1+\gamma) k \\ 
            & (1+\gamma) \lambda_A \leq (1+\gamma) k \\
            & (1+\gamma) \lambda_A + 1 \leq (1+\gamma) k + 1 \\
            & \frac{(1+\gamma) \lambda_A + 1}{(1+\gamma) k + 1} \leq 1 \\
            & \lambda_2 \leq 1 
        \end{split}
    \end{equation}
    In this way, we have shown the maximum eigenvalue of $\Tilde{A}_N^{\gamma}$ is $1$. The lower bound of the eigenvalues can be shown as 
    \begin{equation}
        \begin{split}
            & \lambda_{1} \geq -1 \\
            & (k+1) \lambda_{1} \geq  -k -1 \\
            & (k+1) \lambda_{1} - 1 \geq -k -1 -1 \\ 
            & \lambda_A \leq -k -2 
        \end{split}
    \end{equation}
    Using the inequality we will prove the next stage as 
    \begin{equation}
        \begin{split}
            & \lambda_A \leq -k -2  \\ 
            & (1+\gamma) \lambda_A \geq (1+\gamma) (-k-2) \\ 
            & (1+\gamma) \lambda_A + 1 \geq (1+\gamma) (-k-2) + 1 \\
            & \frac{(1+\gamma) \lambda_A + 1}{(1+\gamma) k + 1} \geq \frac{(1+\gamma) (-k-2) + 1}{(1+\gamma) k + 1} \\
            & \lambda_2 \geq 1 - \frac{2(k+1)(\gamma+1)}{(1+\gamma) k + 1} \\ 
        \end{split}
    \end{equation}
    As $k, \gamma > 0 $, then we have $\lambda_2 \geq -1$. Therefore, the addition of parallel edges will not alter the range of the eigenvalues of the symmetrically normalized adjacency matrix. 
\end{proof}

\noindent
\textbf{Theorem \ref{k_regular_self_loop}}
Consider a $k$-regular graph with $\alpha_1, \alpha_2 \in \mathbb{R}^{+}$ with $\alpha_1 \le \alpha_2$, then $\lambda^{i}_{\alpha_1} \ge \lambda^{i}_{\alpha_2}, \forall \,\, 1 \le i \le n$, where $\lambda^{i}_{\alpha_1}$ and $\lambda^{i}_{\alpha_2}$ are the $i^{th}$ eigenvalues of $\Tilde{L}_{\alpha_1}$ and $\Tilde{L}_{\alpha_2}$ respectively. 
\begin{proof}
    Consider a $k$-regular graph $\mathcal{G}$ where each node of degree $k$ with the normalized adjacency matrix is $A_N=\Tilde{D}^{-\frac{1}{2}} \Tilde{A} \Tilde{D}^{-\frac{1}{2}}$ where $\Tilde{A}=A+I, \Tilde{D}=D+I$. If $\alpha$-times (with $\alpha \geq 1$) self-loops are added, then the updated normalized adjacency matrix is $\Tilde{A}^{\alpha}_{N}=\Tilde{D}_{\alpha}^{-\frac{1}{2}} \Tilde{A}_{\alpha} \Tilde{D}_{\alpha}^{-\frac{1}{2}}$ where $\Tilde{A}_{\alpha}=A + \alpha I, \Tilde{D}_{\alpha}=D+\alpha I$. As the graph is regular then we can have the following
    \begin{equation}
        \begin{split}
            A_N^{\alpha} &= \Tilde{D}_{\alpha}^{-\frac{1}{2}} \Tilde{A}_{\alpha} \Tilde{D}_{\alpha}^{-\frac{1}{2}} \\
            &= \frac{1}{\sqrt{k+\alpha}} (A+\alpha I) \frac{1}{\sqrt{k+\alpha}} \\
            &= \frac{1}{(k+\alpha)}(A+\alpha I)
        \end{split}
    \end{equation}
    Therefore, we can express $A_N^{\alpha_1}=\frac{1}{k+\alpha_1}(A+\alpha_1 I)$ and $A_N^{\alpha_2}=\frac{1}{k+\alpha_2}((A+\alpha_2 I)$. Since, $A^{\alpha_1}_N$ and $A_{N}^{\alpha_2}$ are the linear transformation of $A$, both will have the same set of eigenvectors but with different eigenvalues. Assume $v$ is the eigenvector of $A$ and its corresponding eigenvalue is $\lambda_{\alpha_1}$. Therefore, the following can be written as
    \begin{equation}
        \begin{split}
            & A_N^{\alpha_1} v = \lambda_{\alpha_1} v \\
            & \frac{1}{(k+\alpha_1)}(A+\alpha_1 I) v = \lambda_{\alpha_1} v \\
            & A v = ((k + \alpha_1) \lambda_{\alpha_1} - \alpha_1) v 
        \end{split}
    \end{equation}
    We can say that $v$ is the eigenvector of $A$ with the corresponding eigenvalue $\lambda_{A}=((k + \alpha_1) \lambda_{\alpha_1} - \alpha_1)$. For $A_{N}^{\alpha_2}$, for eigenvector $v$, the corresponding eigenvalue is $\lambda_{\alpha_2}$. Now, we have the following:
    \begin{equation}
        \begin{split}
            & A_N^{\alpha_2} v = \lambda_{\alpha_2} v \\
            & \frac{1}{(k+\alpha_2)}(A+\alpha_2 I) v = \lambda_{\alpha_2} v \\
            & A v = ((k + \alpha_2) \lambda_{\alpha_2} - \alpha_2) v 
        \end{split}
    \end{equation}
    Similarly,  we can also express $\lambda_A=((k + \alpha_2) \lambda_{\alpha_2} - \alpha_2)$. Now, equating the two different expressions of $\lambda_A$, we can express the following
    \begin{equation*}
        \begin{split}
            & (k + \alpha_1) \lambda_{\alpha_1} - \alpha_1 = (k + \alpha_2) \lambda_{\alpha_2} - \alpha_2 \\
            & k \lambda_{\alpha_1} - \alpha_1(1 - \lambda_{\alpha_1}) = k \lambda_{\alpha_2} - \alpha_2 (1 - \lambda_{\alpha_2}) \\ 
            & k (\lambda_{\alpha_1} - \lambda_{\alpha_2}) = \alpha_1(1 - \lambda_{\alpha_1}) - \alpha_2 (1 - \lambda_{\alpha_2}) \\ 
            & \text{As per provided condition, we have}\,\,  \alpha_1 < \alpha_2 \\ 
            & \alpha_1 (1 - \lambda_{\alpha_1}) < \alpha_2 (1 - \lambda_{\alpha_1}) \\
            & \alpha_1 (1 - \lambda_{\alpha_1}) - \alpha_2 (1 - \lambda_{\alpha_2}) < \alpha_2 (1 - \lambda_{\alpha_1}) - \alpha_2 (1 - \lambda_{\alpha_2}) \\
            & k (\lambda_{\alpha_1} - \lambda_{\alpha_2}) < \alpha_2 (1 - \lambda_{\alpha_1}) - \alpha_2 (1 - \lambda_{\alpha_2}) \\ 
            & k (\lambda_{\alpha_1} - \lambda_{\alpha_2}) < \alpha_2 (\lambda_{\alpha_2} -\lambda_{\alpha_1}) \\ 
            & (k + \alpha_2)(\lambda_{\alpha_!} - \lambda_{\alpha_2}) < 0 \\ 
            & \text{As}\,\, k,\alpha_2 >0 \ \,\, \text{then}  \\
            & \lambda_{\alpha_!} - \lambda_{\alpha_2} < 0 \\
            & \lambda_{\alpha_!} < \lambda_{\alpha_2}
        \end{split}
    \end{equation*}
     The above equation holds for all $|\lambda_{\alpha_1}|, |\lambda_{\alpha_2}| \leq 1$ which is ensured from Lemma \ref{lem_sl}.
     The eigenvalue of $A_{N}^{\alpha_2}$ became greater than that of $A_{N}^{\alpha_1}$ when self-loops are added to the graph. We know that $\Tilde{L}_{\alpha} = \text{I} - \Tilde{D}^{-\frac{1}{2}}_{\alpha} \Tilde{A}_{\alpha} \Tilde{D}^{-\frac{1}{2}}_{\alpha}$, indicates if the eigenvalue of $A_{N}^{\alpha}$ increases then the corresponding eigenvalue of $\Tilde{L}_{\alpha}$ decreases. Therefore, it can be concluded that the eigenvalue of $\Tilde{L}_{\alpha}$ decreases with the addition of self-loops. 
\end{proof}

\noindent
\textbf{Theorem \ref{k_regular_parallel_edge}}
Consider a $k$-regular graph with $\gamma_1, \gamma_2 \in \mathbb{R}^{+}$ with $\gamma_1 \le \gamma_2$, then $\lambda^{i}_{\gamma_1} \le \lambda^{i}_{\gamma_2}, \forall \,\, 1 \le i \le n$, where $\lambda^{i}_{\gamma_1}$ and $\lambda^{i}_{\gamma_2}$ are the $i^{th}$ eigenvalues of $\Tilde{L}_{\gamma_1}$ and $\Tilde{L}_{\gamma_2}$ respectively. 
\begin{proof}
    Consider a $k$-regular graph $\mathcal{G}$ where each node has a degree $k$ with the normalized adjacency matrix is $A_N=\Tilde{D}^{-\frac{1}{2}} \Tilde{A} \Tilde{D}^{-\frac{1}{2}}$ where $\Tilde{A}=A+I, \Tilde{D}=D+I$. If $\gamma$-times (with $\gamma \geq 1$) parallel edges are added, then the updated normalized adjacency matrix is $\Tilde{A}^{\gamma}_{N}=\Tilde{D}_{\gamma}^{-\frac{1}{2}} \Tilde{A}_{\gamma} \Tilde{D}_{\gamma}^{-\frac{1}{2}}$ where $\Tilde{A}_{\gamma}=(1+\gamma)A+I, \Tilde{D}_{\gamma}=(1+\gamma)D+I$. As the graph is regular then we can have the following
    \begin{equation}
        \begin{split}
            A_N^{\gamma} &= \Tilde{D}_{\gamma}^{-\frac{1}{2}} \Tilde{A}_{\gamma} \Tilde{D}_{\gamma}^{-\frac{1}{2}} \\
            &= \frac{1}{\sqrt{1+(1+\gamma)k}} ((1+\gamma)A+I) \frac{1}{\sqrt{1+(\gamma+1)k}} \\
            &= \frac{1}{1+(1+\gamma)k}((1+\gamma)A+I)
        \end{split}
    \end{equation}
    Therefore, we can express $A_N^{\gamma_1}=\frac{1}{1+(1+\gamma_1)k}((1+\gamma_1)A+I)$ and $A_N^{\gamma_2}=\frac{1}{1+(1+\gamma_2)k}((1+\gamma_2)A+I)$. Since, $A^{\gamma_1}_N$ and $A_{N}^{\gamma_2}$ are the scalar transformation of $A$, both will have the same set of eigenvectors with different eigenvalues. Assume $v$ is the eigenvector of $A$ and its corresponding eigenvalue is $\lambda_{\gamma_1}$. Therefore, the following can be written as
    \begin{equation}
        \begin{split}
            & A_N^{\gamma_1} v = \lambda_{\gamma_1} v \\
            & \frac{1}{1+(1+\gamma_1)k}((1+\gamma_1)A+I) v = \lambda_{\gamma_1} v \\
            & A v = \frac{(1+(1+\gamma_1)k) \lambda_{\gamma_1} - 1}{1+\gamma_1} v 
        \end{split}
    \end{equation}
    We can say that $v$ is the eigenvector of $A$ with the corresponding eigenvalue $\lambda_A=\frac{(1+(1+\gamma_1)k) \lambda_{\gamma_1} - 1}{1+\gamma_1}$. For $A_{N}^{\gamma}$, the eigenvector is $v$ with the eigenvalue $\lambda_2$. Then, we have the following
    \begin{equation}
        \begin{split}
           & A_{N}^{\gamma_2} v = \lambda_{\gamma_2} v \\ 
           & \frac{1}{1+(1+\gamma_2)k}((1+\gamma_2)A+I) v = \lambda_{\gamma_2} v \\
           & A v = \frac{(1+(1+\gamma_2)k) \lambda_{\gamma_2} - 1}{1+\gamma_2} v 
        \end{split}
    \end{equation}
    We can also express $\lambda_A=\frac{(1+(1+\gamma_2)k) \lambda_{\gamma_2} - 1}{1+\gamma_2}$. Now, equating the two different values of $\lambda_A$, we can have the following
    \begin{equation}
    \label{final_eq}
        \begin{split}
            & \frac{(1+(1+\gamma_1)k) \lambda_{\gamma_1} - 1}{1+\gamma_1} = \frac{(1+(1+\gamma_2)k) \lambda_{\gamma_2} - 1}{1+\gamma_2} \\
            & \begin{split} (1+\gamma_2)((1+(1+\gamma_1)k) \lambda_{\gamma_1} & - 1) = \\ & (1+\gamma_1)(1+(1+\gamma_2)k) \lambda_{\gamma_2} - 1) \end{split} \\
            & \begin{split} (1+\gamma_2)(1+(1+ \gamma_1) & k) \lambda_{\gamma_1} - (1+\gamma_2) = \\ \hspace{2cm} & (1+\gamma_1)(1+(1+\gamma_2)k) \lambda_{\gamma_2} - (1+\gamma_1) \end{split} \\
            & \begin{split} (1+\gamma_2 + (1+ \gamma_1)(& 1 + \gamma_2)k) \lambda_{\gamma_1} - 1 - \gamma_2 = \\  & (1+\gamma_1 + (1+\gamma_1)(1+\gamma_2)k) \lambda_{\gamma_2} - 1 - \gamma_1 \end{split} \\ 
            & \begin{split} (1+\gamma_2 + (1+ & \gamma_1 )( 1 + \gamma_2)k) \lambda_{\gamma_1} = \\ & (\gamma_2 - \gamma_1) + (1+\gamma_1 + (1+\gamma_1)(1+\gamma_2)k) \lambda_{\gamma_2} \end{split} \\
            & \begin{split} \lambda_{\gamma_1} = & \frac{\gamma_2-\gamma_1}{(1+\gamma_2 + (1+\gamma_1)(1+\gamma_2)k)} + \\ & \hspace{3.5cm} \frac{(1+\gamma_1 + (1+\gamma_1)(1+\gamma_2)k)}{(1+\gamma_2 + (1+\gamma_1)(1+\gamma_2)k)} \lambda_{\gamma_2} \end{split} \\
            & \begin{split} = & \frac{\gamma_2-\gamma_1}{(1+\gamma_2 + (1+\gamma_1)(1+\gamma_2)k)} + \\ & \hspace{2cm} \frac{(1+\gamma_2 + (1+\gamma_1)(1+\gamma_2)k) - \gamma_2 + \gamma_1}{(1+\gamma_2 + (1+\gamma_1)(1+\gamma_2)k)} \lambda_{\gamma_2} \end{split} \\
            & \begin{split} = & \frac{\gamma_2-\gamma_1}{(1+\gamma_2 + (1+\gamma_1)(1+\gamma_2)k)} + \\ & \hspace{2cm} (1 - \frac{\gamma_2-\gamma_1}{(1+\gamma_2 + (1+\gamma_1)(1+\gamma_2)k)}) \lambda_{\gamma_2} \end{split} \\ 
            & =  \frac{\gamma_2-\gamma_1}{(1+\gamma_2 + (1+\gamma_1)(1+\gamma_2)k)}(1 - \lambda_{\gamma_2}) + \lambda_{\gamma_2} \\ 
            & \text{According to the condition provided}\,\, \gamma_2 > \gamma_1 \,\, \text{we can say} \\
            & \lambda_{\gamma_1} > \lambda_{\gamma_2} 
        \end{split}
    \end{equation}
    The eigenvalue of $A_{N}^{\gamma_2}$ became lesser than that of $A_N^{\gamma_1}$ with the addition of parallel edges. The Eq. \ref{final_eq} holds for $|\lambda_{\gamma_2}| \le 1$ which is assured from Lemma \ref{lem_pe}. We know that $\Tilde{L}_{\gamma} = \text{I} - \Tilde{D}^{-\frac{1}{2}}_{\gamma} \Tilde{A}_{\gamma} \Tilde{D}^{-\frac{1}{2}}_{\gamma}$, indicates if the eigenvalue of $A_{N}^{\gamma}$ decreases then the corresponding eigenvalue of $\Tilde{L}_{\gamma}$ increases. Therefore, it can be concluded that the eigenvalue of $\Tilde{L}_{\gamma}$ increases with the addition of parallel edges for the regular graphs.  
\end{proof}

\begin{corollary}
\label{cor1_sl_for_pe}
    The increase in the eigenvalues of $L_{\gamma}$ is independent of the number of self-loop additions in $\mathcal{G}$. On the contrary, The eigenvalues of $\Tilde{L}_{\gamma}$ will increase if at least one self-loop is added per node in $\mathcal{G}$. 
\end{corollary}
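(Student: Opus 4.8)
The plan is to break the corollary into its two assertions and dispatch each by a short direct computation, borrowing Theorem~\ref{k_regular_parallel_edge} for the single step that is not immediate.

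For the first assertion I would write down the Laplacian obtained by simultaneously inserting $\gamma$ parallel edges and $\alpha$ self-loops: the adjacency matrix is $(\gamma+1)A+\alpha\text{I}$ and the degree matrix is $(\gamma+1)D+\alpha\text{I}$, so the unnormalized Laplacian equals
\[
\bigl((\gamma+1)D+\alpha\text{I}\bigr)-\bigl((\gamma+1)A+\alpha\text{I}\bigr)=(\gamma+1)(D-A)=(\gamma+1)L=L_{\gamma}.
\]
The $\alpha\text{I}$ terms cancel, so $L_{\gamma}$, and in particular its spectrum $\{(\gamma+1)\mu_i\}$ with $\mu_1\le\cdots\le\mu_n$ the eigenvalues of $L=D-A$, carries no dependence on $\alpha$ whatsoever. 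Passing from $\gamma_1$ to $\gamma_2>\gamma_1$ multiplies every eigenvalue by the factor $(\gamma_2+1)/(\gamma_1+1)>1$, which again does not involve $\alpha$ (the eigenvalue $0$ of $L$ stays at $0$, so ``increase'' is read as non-decrease, strict for $\mu_i>0$). This settles the first sentence.

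For the second assertion I would contrast $\alpha=0$ with $\alpha\ge 1$ in the \emph{normalized} Laplacian. If no self-loop is added, then
\[
\Tilde{D}_{\gamma}^{-\frac{1}{2}}\Tilde{A}_{\gamma}\Tilde{D}_{\gamma}^{-\frac{1}{2}}=\bigl((\gamma+1)D\bigr)^{-\frac{1}{2}}(\gamma+1)A\bigl((\gamma+1)D\bigr)^{-\frac{1}{2}}=D^{-\frac{1}{2}}AD^{-\frac{1}{2}},
\]
because the scalar $(\gamma+1)$ is annihilated by the symmetric normalization; hence $\text{I}-\Tilde{D}_{\gamma}^{-\frac{1}{2}}\Tilde{A}_{\gamma}\Tilde{D}_{\gamma}^{-\frac{1}{2}}$ coincides with the normalized Laplacian of the original graph, independently of $\gamma$, so not a single eigenvalue moves. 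Thus at least one self-loop per node is genuinely necessary for parallel edges to shift the normalized spectrum. Conversely, once $\alpha\ge 1$ the surviving $\alpha\text{I}$ term breaks this scale invariance; for $\alpha=1$ this is exactly the setting of Theorem~\ref{k_regular_parallel_edge}, which yields $\lambda^{i}_{\gamma_1}\le\lambda^{i}_{\gamma_2}$ whenever $\gamma_1\le\gamma_2$, i.e.\ the eigenvalues of $\Tilde{L}_{\gamma}$ increase with $\gamma$. I would then observe that the monotonicity argument used there only invokes $k,\gamma>0$ after the fixed adjacency eigenvalue $\lambda_A$ has been extracted, so it transfers verbatim with $\Tilde{A}_{\gamma}=(\gamma+1)A+\alpha\text{I}$ and $\Tilde{D}_{\gamma}=(\gamma+1)D+\alpha\text{I}$ for any fixed $\alpha\ge 1$.

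The argument is short precisely because the substantive monotonicity in $\gamma$ is already established in Theorem~\ref{k_regular_parallel_edge}; the one point that deserves care is the observation that the $(\gamma+1)$ scaling cancels under the symmetric normalization \emph{exactly} when $\alpha=0$, which is what makes the ``at least one self-loop'' threshold sharp rather than an artefact of the normalization. The only mild extension required is verifying that the proof of Theorem~\ref{k_regular_parallel_edge} does not covertly use $\alpha=1$ --- it does not --- so no fresh eigenvalue estimate is needed.
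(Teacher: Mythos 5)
Your proposal is correct and follows essentially the same route as the paper: the first assertion via the cancellation of the $\alpha\text{I}$ terms giving $L_{\gamma}=(\gamma+1)L$, and the second via the observation that the $(\gamma+1)$ factor is absorbed by the symmetric normalization when $\alpha=0$, so $\Tilde{L}_{\gamma}=\Tilde{L}$. You are in fact slightly more complete than the paper, which only establishes the necessity of a self-loop (nothing moves when $\alpha=0$) and leaves the sufficiency direction implicit, whereas you explicitly tie it back to Theorem~\ref{k_regular_parallel_edge} and check that its argument does not depend on $\alpha=1$.
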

\begin{proof}
    Let us prove the statement by contradiction. We know $L=D-A$ and after adding $\alpha$-times self-loops and $\gamma$-times parallel edges, the $L_{\gamma}=D_{\gamma} - A_{\gamma}$ where $\Tilde{A}_{\gamma}=(1+\gamma)A+ \alpha I, \Tilde{D}_{\gamma}=(\gamma+1)D+ \alpha I$. Then, 
    \begin{equation}
        \begin{split}
            L_{\gamma} & = (\Tilde{D}_{\gamma} - \Tilde{A}_{\gamma}) \\
            &= (((\gamma+1) D + \alpha I) - ((\gamma+1)A + \alpha I)) \\
            &= (\gamma+1) (D - A) \\
            &= (\gamma+1) L 
        \end{split}
    \end{equation}
    The equation is independent of the number of self-loops we confirm that the effect of adding parallel edges prevails.
    Similarly, let us also prove the second part by contradiction. We know that $\Tilde{L}_{\gamma} = \text{I} - \Tilde{D}^{-\frac{1}{2}}_{\gamma} \Tilde{A}_{\gamma} \Tilde{D}^{-\frac{1}{2}}_{\gamma}$ and consider the expression without self-loops with the addition of $\gamma$-times parallel edges as $\Tilde{D} = (1+\gamma)D, \Tilde{A} = (1+\gamma)A$. 
    \begin{equation}
        \begin{split}
            & \Tilde{L}_{\gamma} = \text{I} - \Tilde{D}^{-\frac{1}{2}}_{\gamma} \Tilde{A}_{\gamma} \Tilde{D}^{-\frac{1}{2}}_{\gamma} \\  
            & = \text{I} - \frac{1}{\sqrt{(1+\gamma)}} D^{-\frac{1}{2}} (1+\gamma) A \frac{1}{\sqrt{(1+\gamma)}} D^{-\frac{1}{2}} \\ 
            & = \text{I} - D^{-\frac{1}{2}} A D^{-\frac{1}{2}} \\ 
            & = \Tilde{L}
        \end{split}
    \end{equation}
    Therefore, adding parallel edges without at least one self-loop per node does not change the normalized graph Laplacian. Hence, the result is proved. 
\end{proof}

\noindent
\textbf{Theorem \ref{approx_sl_thm}}
Consider a connected graph $\mathcal{G}$ with $A_N=D^{-\frac{1}{2}} A D^{-\frac{1}{2}}$. Assuming the diagonal of $A$ of $G$ is perturbed by a significantly small $\alpha > 0$, then the updated normalized adjacency matrix will be $A_{N}^{\alpha}$. The change in the eigenvalues of $A_N^{\alpha}$ with respect to eigenvalues of $A_{N}$ will increase when $\alpha$ increases.
\begin{proof}
If the diagonal of $A$ is perturbed by $\alpha$, the symmetrically normalized graph Laplacian of $G$ will be,
\begin{equation}
\label{eq:norm_lap}
        \Tilde{L}_{\alpha} = \text{I} - \Tilde{D}^{-\frac{1}{2}}_{\alpha} \Tilde{A}_{\alpha} \Tilde{D}^{-\frac{1}{2}}_{\alpha},
\end{equation}
where $\Tilde{A}_{\alpha}= A+ \alpha \text{I}$ and $\Tilde{D}_{\alpha}= D+ \alpha \text{I}$. Let us denote the normalized adjacency matrix without self-loops as $A_N=D^{-\frac{1}{2}} A D^{-\frac{1}{2}}$. The element of $A_N$ is represented as:
\begin{equation}
    A_N[i][j] = \begin{cases}
        \frac{A_{ij}}{\sqrt{d_i d_j}}, \,\, i \neq j \\
        0, \hspace{1cm}  i = j
    \end{cases}
\end{equation}
After perturbed by $\alpha$, the normalized adjacency will be $A_N^{\alpha}=\Tilde{D}^{-\frac{1}{2}}_{\alpha} \Tilde{A}_{\alpha} \Tilde{D}^{-\frac{1}{2}}_{\alpha}$. The elements of $\Tilde{A}_N^{\alpha}$ can be represented as:
\begin{equation}
    A_N^{\alpha}[i][j] = \begin{cases}
        \frac{A_{ij}}{\sqrt{\alpha + d_i} \sqrt{\alpha + d_j}} \,\, i \neq j \\
        \frac{\alpha}{\alpha + d_i} \hspace{1cm} i = j
    \end{cases}
\end{equation}
The entry-wise change in the normalized adjacency matrix is presented as:
\begin{equation}
    \delta A_N[i][j] = \begin{cases}
        \frac{A_{ij}}{\sqrt{\alpha + d_i} \sqrt{\alpha + d_j}} - \frac{A_{ij}}{\sqrt{d_i d_j}}, \,\, i \neq j \\
        \frac{\alpha}{\alpha + d_i}, \,\,\,\, i = j
    \end{cases}
\end{equation}
Following the notion of the Theorem $4$ stated in \cite{fosr} we can assume $x$ is a normalized eigenvector of $A_{N}$ with corresponding eigenvalue $\lambda$. Therefore, the first-order change in the corresponding eigenvalue can be represented as:
\begin{equation}
\begin{split}
    x^{\top} (\delta A_N) x &= \sum_{i \neq j} \delta A_N[i][j] x_i x_j + \sum_{i = j} \delta A_N[i][j] x^{2}_{i} \\
    & \hspace{-1.2cm} = \begin{split} \sum_{i \neq j} & \left( \frac{A_{ij}}{\sqrt{\alpha + d_i} \sqrt{\alpha + d_j}} - \frac{A_{ij}}{\sqrt{d_i d_j}} \right) x_i x_j \\ & + \sum_{i=j} \frac{\alpha}{\alpha+ d_i} x^{2}_{i} \end{split} \\
    & \hspace{-1.2cm} = \begin{split} \sum_{i \neq j} & \left( \frac{1 }{\sqrt{\alpha + d_i} \sqrt{\alpha + d_j}} - \frac{1}{\sqrt{d_i d_j}} \right) A_{ij} x_i x_j \\ & + \sum_{i=j} \frac{\alpha}{\alpha + d_i} x^{2}_{i} \end{split} \\
\end{split}
\end{equation}
Consider,   
\begin{equation}
    \begin{split}
        F^{(1)}_{ij}(\alpha) &= \left( \frac{1}{\sqrt{\alpha + d_i} \sqrt{\alpha + d_j}} - \frac{1}{\sqrt{d_i d_j}} \right) \\
        F^{(2)}_{i}(\alpha) &= \frac{\alpha}{\alpha+ d_i} x_i^{2}
    \end{split}
\end{equation}
If $d_i, d_j \gg \alpha$, then $F^{(1)}_{ij}(\alpha) \approx 0$ which lead to
\begin{equation}
    x^{\top} (\delta A_{N}) x \approx F^{(2)}_{i}(\alpha) \approx \sum_{i = j} \frac{\alpha}{\alpha + d_i} x^{2}_i.
\end{equation}
If $\alpha$ increases then $F^{(2)}_{i}(\alpha)$ also increases. This reflects the change in the eigenvalue of $A_{N}^{\alpha}$ increases, indicating the decrease in the eigenvalues of the $\Tilde{L}_{\alpha}$.  
\end{proof}

\noindent
\textbf{Theorem \ref{approx_pe_thm}}
Consider a connected graph $\mathcal{G}$ with normalized adjacency matrix $A_N=D^{-\frac{1}{2}} A D^{-\frac{1}{2}}$. Assuming each element of $A$ except the diagonal is multiplied by $1+\gamma$ where $\gamma > 0$ is a significantly small quantity, then the updated normalized adjacency matrix will be $A_{N}^{\gamma}$. The change in the eigenvalues of $A_{N}^{\gamma}$ with respect to the eigenvalues of $A_N$ will decrease when $\gamma$ increases. 
\begin{proof}
If the non-diagonal elements of $A$ are multiplied by $1+\gamma$, then the symmetrically normalized graph Laplacian will be,
\begin{equation}
\label{eq:norm_lap}
        \Tilde{L}_{\gamma} = \text{I} - \Tilde{D}^{-\frac{1}{2}}_{\gamma} \Tilde{A}_{\gamma} \Tilde{D}^{-\frac{1}{2}}_{\gamma},
\end{equation}
where $\Tilde{A}_{\gamma}=(\gamma+1)A+\text{I}$ and $\Tilde{D}_{\gamma}=(\gamma+1)D+\text{I}$. Let us denote the normalized adjacency matrix without self-loops as $A_N=D^{-\frac{1}{2}} A D^{-\frac{1}{2}}$. The element of $A_N$ is represented as:
\begin{equation}
    A_N[i][j] = \begin{cases}
        \frac{A_{ij}}{\sqrt{d_i d_j}}, \,\, i \neq j \\
        0, \hspace{1cm}  i = j
    \end{cases}
\end{equation}
After multiplying $(1+\gamma)$ to the non-diagonal elements of $A$ and adding one self-loops, the normalized adjacency will be $A_N^{\gamma}=\Tilde{D}^{-\frac{1}{2}}_{\gamma} \Tilde{A}_{\gamma} \Tilde{D}^{-\frac{1}{2}}_{\gamma}$. The elements of $\Tilde{A}_N^{\gamma}$ can be represented as:
\begin{equation}
    A_N^{\gamma}[i][j] = \begin{cases}
        \frac{(\gamma+1) A_{ij}}{\sqrt{1 + (\gamma+1)d_i} \sqrt{1 + (\gamma+1)d_j}} \,\, i \neq j \\
        \frac{1}{1 + (1+\gamma) d_i} \hspace{1cm} i = j
    \end{cases}
\end{equation}
The entry-wise change in the normalized adjacency matrix is presented as:
\begin{equation}
    \delta A_N[i][j] = \begin{cases}
        \frac{(\gamma+1) A_{ij}}{\sqrt{1 + (\gamma+1)d_i} \sqrt{1 + (\gamma+1)d_j}} - \frac{A_{ij}}{\sqrt{d_i d_j}}, \,\, i \neq j \\
        \frac{1}{1+(1+\gamma) d_i}, \,\,\,\, i = j
    \end{cases}
\end{equation}
Following the notion of the Theorem $4$ stated in \cite{fosr} we can assume $x$ is a normalized eigenvector of $A_{N}$ with corresponding eigenvalue $\lambda$. Therefore, the first-order change in the spectral gap can be represented as:
\begin{equation}
\begin{split}
    x^{\top} (\delta A_N) x &= \sum_{i \neq j} \delta A_N[i][j] x_i x_j + \sum_{i = j} \delta A_N[i][j] x^{2}_{i} \\
    & \hspace{-1.4cm} = \begin{split} \sum_{i \neq j} & \left( \frac{(\gamma+1) A_{ij}}{\sqrt{1 + (\gamma+1)d_i} \sqrt{1 + (\gamma+1)d_j}} - \frac{A_{ij}}{\sqrt{d_i d_j}} \right) x_i x_j \\ & + \sum_{i=j} \frac{1}{1+(1+\gamma) d_i} x^{2}_{i} \end{split} \\
    & \hspace{-1.4cm} = \begin{split} \sum_{i \neq j} & \left( \frac{(\gamma+1) }{\sqrt{1 + (\gamma+1)d_i} \sqrt{1 + (\gamma+1)d_j}} - \frac{1}{\sqrt{d_i d_j}} \right) A_{ij} x_i x_j \\ & + \sum_{i=j} \frac{1}{1+(1+\gamma) d_i} x^{2}_{i} \end{split} \\
\end{split}
\end{equation}
Consider the following,
\begin{equation}
    \begin{split}
        F^{(1)}_{ij}(\gamma) &= \left( \frac{(\gamma+1) }{\sqrt{1 + (\gamma+1)d_i} \sqrt{1 + (\gamma+1)d_j}} - \frac{1}{\sqrt{d_i d_j}} \right) \\
        F^{(2)}_{i}(\gamma) &= \frac{1}{1+(1+\gamma) d_i} x^{2}_{i}
    \end{split}
\end{equation}
Now we can rewrite,
\begin{equation}
\begin{split}
    F^{(1)}_{ij}(\gamma) &= \left( \frac{(\gamma+1) }{\sqrt{1 + (\gamma+1)d_i} \sqrt{1 + (\gamma+1)d_j}} - \frac{1}{\sqrt{d_i d_j}} \right) \\
    &= \left( \frac{1}{\sqrt{\frac{1}{1+\gamma} + d_i} \sqrt{\frac{1}{1+\gamma} + d_j}} - \frac{1}{\sqrt{d_i d_j}} \right) 
\end{split}
\end{equation}
As we mentioned $\gamma$ is a sufficiently small quantity and with $d_i, d_j \gg 1$, then $F^{(1)}_{ij}(\gamma) \approx 0$. Now we have, 
\begin{equation}
    x^{\top} (\delta A_N) x \approx F^{(2)}_{i}(\gamma) \approx \frac{1}{1+(1+\gamma) d_i} x^{2}_{i}
\end{equation}
If $\gamma$ increases then $F^{(2)}_{i}(\gamma)$ decreases reflecting the change in the eigenvalues of $\Tilde{A}_{N}^{\gamma}$ decreases. This indicates the increase in the eigenvalues of $\Tilde{L}^{\gamma}_{N}$. 
\end{proof}

\end{document}